\documentclass{article}

\PassOptionsToPackage{numbers,compress}{natbib}
\usepackage{xspace}
\usepackage{todonotes}

 \usepackage[preprint]{neurips_2026}

\usepackage[utf8]{inputenc} %
\usepackage[T1]{fontenc}    %
\usepackage{hyperref}       %
\usepackage{url}            %
\usepackage{booktabs}       %
\usepackage{amsfonts}       %
\usepackage{nicefrac}       %
\usepackage{microtype}      %
\usepackage{xcolor}         %

\usepackage{amsthm}
\usepackage{multirow}
\usepackage{wrapfig}
\usepackage{amssymb}
\usepackage{amsmath}
\usepackage{mathtools}
\usepackage{cleveref}
\makeatletter
\AddToHook{cmd/appendix/before}{\def\Cref@section@alias{appendix}\def\Cref@subsection@alias{appendix}}
\makeatother
\newcommand{\GaussianIG}{\textsc{GaussianIG}\xspace}
\usepackage{algorithm}
\usepackage{algpseudocode}

\newcommand{\enc}{\operatorname{enc}}
\newcommand{\entropy}{H}
\newcommand{\IG}{\operatorname{\widehat{IG}}}

\newtheorem{theorem}{Theorem}
\newtheorem*{theorem*}{Theorem}
\newtheorem{lemma}{Lemma}
\newtheorem*{lemma*}{Lemma}

\theoremstyle{definition}

\usepackage{bm} %
\usepackage{subcaption}

\usepackage{graphicx}

\algnewcommand{\Input}{\item[\textbf{Input:}]}
\algnewcommand{\Output}{\item[\textbf{Output:}]}
\usepackage{xcolor}
\newcommand{\bz}{{\bf z}}
\newcommand{\bZ}{{\bf Z}}

\usepackage[normalem]{ulem} 
\newif\ifshowchanges

\showchangestrue     %

\ifshowchanges

  \newcommand{\del}[1]{\textcolor{red}{\sout{#1}}}

\else

  \newcommand{\del}[1]{}

\fi

\algrenewcommand\algorithmiccomment[1]{\hfill{\color{black!55}\textit{$\triangleright$ #1}}}

\usetikzlibrary{arrows.meta}

\title{Measuring Semantic Progress in Multi-turn Dialogue via Information Gain}

\author{%
  Paul He\thanks{Work done while at Amazon Research, Tübingen, Germany} \\
  NTU Singapore\\
  \texttt{paul005@ntu.edu.sg} \\
  \\\And
  Shiva Kasiviswanathan\\
  Amazon\\
  \texttt{kasivisw@gmail.com} \\\And
  Dominik Janzing\\
  Amazon\\
  \texttt{janzind@amazon.com}
}

\begin{document}

\maketitle

\begin{abstract}
Evaluating multi-turn dialogue is challenging because quality emerges across turns rather than within individual responses. We focus on a key dimension of information-seeking dialogue: {\em semantic progress}, defined as the accumulation of new, question-relevant, and non-redundant information over the course of a conversation. We formalize semantic progress as question-conditioned uncertainty reduction and introduce an information-theoretic metric that approximates it in embedding space. Our main estimator uses a tractable Gaussian formulation with closed-form updates, while a complementary maximum-entropy argument shows why log-determinant structure arises more broadly when only second-order embedding information is retained. This formulation yields desirable theoretical properties, including monotonicity, additive decomposition of total information gain across turns, and diminishing returns for redundant evidence. Unlike LLM-as-a-judge approaches, our metric requires no autoregressive inference at evaluation time and is fully reproducible for a fixed embedding model. Experiments on MT-Bench, Chatbot Arena, and UltraFeedback show that the proposed metric achieves competitive agreement with human judgments despite targeting only semantic progress, with improved alignment on MT-Bench and UltraFeedback compared to several LLM-based judges. Notably, the method remains effective with lightweight embedding models under CPU-only execution, indicating that semantic progress can be captured without reliance on large model capacity.
\end{abstract}

\section{Introduction}
Large Language Models (LLMs) are increasingly deployed as conversational agents across diverse settings. Some interactions are open-ended, creative, or social, while others are primarily information-seeking, such as question answering and decision support \cite{li-etal-2023-autoconv, ye-etal-2023-enhancing, ma-etal-2025-large-language-models-meet, kamalloo-etal-2023-evaluating}. These use cases impose different requirements on evaluation, and no single metric can adequately capture all aspects of dialogue quality, including fluency, factuality, safety, style, user satisfaction, and task success.

In this work, we focus on a practically important subset: information-seeking dialogue. In such interactions, users iteratively refine their queries across turns, while agents provide clarifications, evidence, and partial answers that accumulate over the course of the conversation \cite{wu-etal-2023-inscit}. As a result, evaluating these dialogues requires more than assessing individual responses in isolation—it requires measuring how the conversation evolves over time \cite{deshpande-etal-2025-multichallenge, kim-etal-2022-mismatch}.

This temporal perspective is particularly important in real-world deployment. Developers often need to compare model variants, perform regression testing, monitor live systems, and analyze large volumes of user interactions for quality degradation. These workflows demand evaluation signals that are fast, reproducible, scalable, and informative at the level of entire dialogues \cite{deshpande-etal-2025-multichallenge, li-etal-2025-generation, guan2025evaluatingllmbasedagentsmultiturn}.

We focus on one such signal: whether an information-seeking dialogue accumulates new, question-relevant, and non-redundant information over time. We refer to this dimension as \emph{semantic progress}. Effective information-seeking dialogues should exhibit consistent progress across turns while avoiding redundancy and irrelevance \cite{kim-etal-2022-mismatch, finch-etal-2023-dont}. Unlike holistic notions of dialogue quality, semantic progress does not aim to measure fluency, style, safety, or factual correctness. Instead, it captures whether the conversation reduces uncertainty about the information needed to address the user’s request (see~\Cref{fig:intro-semantic-progress}).

We formalize semantic progress as \emph{question-conditioned semantic uncertainty reduction}: as a dialogue unfolds, accumulated answer evidence should reduce uncertainty along directions relevant to the user’s information need. In principle, this reduction could be defined over explicit facts, possible worlds, or structured answer states. However, such representations are typically unavailable in open-ended natural language settings. We therefore approximate this notion directly in embedding space, treating answer evidence as a sequence of observations that progressively shrink the covariance of a latent semantic state along question-relevant directions. The resulting information-gain score is deterministic, reference-free, and can be computed without autoregressive judge inference.

\begin{figure}[t]
\centering
\small
\begin{tikzpicture}[
    font=\small,
    >=latex,
    turnbox/.style={
        draw=black!45,
        line width=0.7pt,
        rounded corners=5pt,
        align=center,
        minimum width=2.25cm,
        minimum height=0.72cm,
        inner xsep=6pt,
        inner ysep=4pt,
        fill=#1,
        outer sep=0pt
    },
    scorebox/.style={
        draw=black!45,
        line width=0.7pt,
        rounded corners=6pt,
        align=center,
        minimum width=3.35cm,
        minimum height=1.25cm,
        inner xsep=7pt,
        inner ysep=6pt,
        fill=#1,
        outer sep=0pt
    },
    arrow/.style={
        -{Latex[length=2.1mm,width=1.6mm]},
        line width=0.85pt,
        draw=black!55,
        shorten >=2pt,
        shorten <=2pt
    },
    faintarrow/.style={
        -{Latex[length=2.1mm,width=1.6mm]},
        line width=0.85pt,
        draw=black!30,
        shorten >=2pt,
        shorten <=2pt
    }
]

\definecolor{cnovel}{HTML}{EEF6FF}
\definecolor{credundant}{HTML}{F7F7F7}
\definecolor{cscore}{HTML}{FFF8E8}
\definecolor{cellipseA}{HTML}{8ECAE6}
\definecolor{cellipseB}{HTML}{5DADE2}
\definecolor{cellipseC}{HTML}{2B6CB0}

\node[turnbox=cnovel] (turn1) at (0,1.35)
{\textbf{Turn 1}\\[-1pt]\scriptsize novel evidence};

\node[turnbox=credundant, text=black!55, draw=black!25] (turn2) at (0,0.45)
{\textbf{Turn 2}\\[-1pt]\scriptsize redundant evidence};

\node[text=black!45] at (0,-0.08) {$\vdots$};

\node[turnbox=cnovel] (turnT) at (0,-0.85)
{\textbf{Turn $T$}\\[-1pt]\scriptsize novel evidence};

\coordinate (center) at (3.35,0.35);

\draw[
    line width=1.6pt,
    draw=cellipseA,
    fill=cellipseA,
    fill opacity=0.12,
    rotate around={-18:(3.35,0.35)}
]
    (center) ellipse (1.25cm and 0.52cm);

\draw[
    line width=1.6pt,
    draw=cellipseB,
    fill=cellipseB,
    fill opacity=0.13,
    rotate around={-18:(3.35,0.35)}
]
    (center) ellipse (0.82cm and 0.36cm);

\draw[
    line width=1.8pt,
    draw=cellipseC,
    fill=cellipseC,
    fill opacity=0.16,
    rotate around={-18:(3.35,0.35)}
]
    (center) ellipse (0.48cm and 0.22cm);

\node[
    align=center,
    font=\scriptsize,
    text=black!60
] at (3.35,-0.82)
{semantic uncertainty shrinks\\[-1pt]
as useful evidence accumulates};

\draw[arrow] (turn1.east) -- (2.05,0.95);
\draw[faintarrow] (turn2.east) -- (2.05,0.45);
\draw[arrow] (turnT.east) -- (2.05,-0.05);

\draw[
    -{Latex[length=1.8mm,width=1.4mm]},
    line width=1.0pt,
    draw=cellipseC
] (3.35,0.35) -- (4.05,0.78);

\draw[
    -{Latex[length=1.8mm,width=1.4mm]},
    line width=0.9pt,
    draw=black!35
] (3.35,0.35) -- (4.05,0.38);

\node[scorebox=cscore] (score) at (6.95,0.35)
{\textbf{Semantic progress}\\[-1pt]
\scriptsize information gain\\[3pt]
\scriptsize accumulates novel evidence\\[-1pt]
\scriptsize with diminishing returns};

\draw[arrow] (4.85,0.35) -- (score.west);

\end{tikzpicture}

\caption{
Conceptual view of \emph{semantic progress} as captured by our information-gain metric. Useful turns introduce question-relevant evidence that reduces semantic uncertainty, while redundant turns yield diminishing marginal gains. Our metric approximates this uncertainty reduction in embedding space and accumulates it as dialogue-level information gain.
}
\label{fig:intro-semantic-progress}
\end{figure}

Our approach is complementary to, rather than a replacement for, holistic evaluators such as LLM-as-a-judge methods. While LLM judges capture broad aspects of dialogue quality, they are computationally expensive and sensitive to prompts, model versions, and stochastic inference. In contrast, our metric targets a specific dimension—semantic progress—while providing a fast, deterministic, and reproducible signal suitable for large-scale evaluation. We focus on semantic progress as one evaluation dimension; factuality, safety, coherence, and style are each studied by extensive evaluation literature \cite{hong-etal-2025-consistencychecker, li2024llmsasjudgescomprehensivesurveyllmbased} and are complementary to the signal proposed here. These external signals can easily be incorporated through additional weighting schemes, but evaluating those combinations is outside the scope of this work.
We make the following contributions:
\begin{itemize}
\item \textbf{Information-theoretic perspective.}
We formulate multi-turn dialogue evaluation as question-conditioned semantic uncertainty reduction, giving a model-agnostic definition of semantic progress as the accumulation of new, relevant, non-redundant information.

\item \textbf{Efficient, training-free estimator.}
We introduce a deterministic, reference-free Gaussian approximation in embedding space with closed-form updates and guarantees of monotonicity, telescoping aggregation, and diminishing returns.

\item \textbf{Empirical validation at scale.}
On MT-Bench, Chatbot Arena, and UltraFeedback, \GaussianIG reaches 84.03\%, 65.80\%, and
71.64\% agreement, respectively, while reducing evaluation time by 3--10x relative to hosted
LLM judges and supporting CPU-only execution.
\end{itemize}

\section{Related Work}
\noindent\textbf{Task-Oriented and Structured Dialogue Evaluation.}
In structured and task-oriented settings, dialogue quality is typically assessed using a combination of turn-level and dialogue-level signals, often augmented with learned evaluators or LLM-based judges. For instance, TD-EVAL employs a two-stage framework that combines turn-level metrics with dialogue-level aggregation and LLM judgments, achieving improved alignment with human preferences on benchmarks such as MultiWOZ and Tau-Bench \cite{budzianowski-etal-2018-multiwoz, yao2024taubenchbenchmarktoolagentuserinteraction, acikgoz-etal-2025-td}. These approaches are tailored to task-specific success criteria and generally depend on supervised models or external evaluators. In contrast, our method abstracts multi-turn dialogue progress as question-conditioned uncertainty reduction, providing a task-agnostic, training-free signal defined directly in embedding space.

\noindent\textbf{Dialogue-Level Coherence and Consistency Metrics.}
Another line of work evaluates dialogue quality using coherence, consistency, and contextual appropriateness across turns \cite{dey-etal-2022-towards, ghazarian-etal-2022-wrong}. These methods assess whether a dialogue remains internally consistent and contextually appropriate given its history. However, they do not explicitly model dialogue-level progress or account for diminishing returns from redundant information, which are central to our formulation.

\noindent\textbf{Large Language Models as Dialogue Evaluators.}
A substantial body of work explores large language models as dialogue evaluators, examining their agreement with human judgments and sensitivity to prompting and evaluation protocols \cite{chen-etal-2023-automatic}. LLM-as-a-judge approaches can achieve strong alignment with human preferences, as demonstrated by benchmarks such as MT-Bench 101, where carefully prompted models (e.g., GPT-4) attain high agreement on multi-turn dialogue quality \cite{Bai_2024}. However, these methods incur significant computational and operational costs \cite{jia-etal-2024-leveraging}, relying on large models, prompt engineering, and repeated autoregressive inference. This makes them expensive and often impractical for large-scale model comparison, regression testing, or continuous monitoring.

\noindent\textbf{Learned Evaluators and Judge Approximation.}
Closely related are learned evaluators that directly approximate human preferences or LLM-based judges, including dialogue-level predictors, pairwise comparison models, and feature-based frameworks \cite{ou-etal-2024-dialogbench, park2024paireval, zhou-etal-2024-llm-feature, li-etal-2025-exploring-reliability}. While often effective, these approaches require supervised training and inherit the opacity, bias, and distributional assumptions of the signals they are trained to replicate.

\noindent\textbf{Positioning of Our Work.}
Prior work spans task-specific evaluators for structured dialogue, coherence and consistency metrics, LLM-as-a-judge approaches, and learned evaluators that approximate human or model judgments. These methods either target holistic quality dimensions or rely on supervised training and prompt-dependent inference, often incurring substantial computational cost and limited reproducibility \cite{li-etal-2025-generation, 10.1609/aaai.v38i17.29923, li-etal-2025-exploring-reliability}. In contrast, we isolate a complementary dimension, \emph{semantic progress}, and formalize it as question-conditioned uncertainty reduction. Our approach is task-agnostic, training-free, and avoids autoregressive inference, producing a fast, deterministic, and reference-free dialogue-level signal. It further provides explicit theoretical guarantees, including monotonicity and diminishing returns, enabling scalable and reliable evaluation pipelines while remaining complementary to holistic LLM-based judges.

\section{Setting the Stage}
Let $\mathcal A$ be a finite \textbf{alphabet} and $\mathcal A^*$ the set of all finite strings over $\mathcal{A}$.
A $T$-turn \textbf{question--answer dialogue} is a sequence $D_{1:T} \triangleq \big((q_1,a_1),\ldots,(q_T,a_T)\big)$ where $(q_t,a_t)\in \mathcal A^*\times \mathcal A^*$ denotes question and answer in step $t \in [T]$. We assume each answer can be decomposed into a finite collection of smaller semantic units, which we refer to as \textbf{evidence items}. Let $\mathcal{E}$ denote the universe of all possible evidence items (e.g., semantic chunks of responses). Let $E_t \subseteq \mathcal{E}$ denote the \textbf{set of evidence} extracted from answer $a_t$. We define $A_t \triangleq \bigcup_{s=1}^t E_s$ as the \textbf{accumulated evidence} observed up to turn $t$. Let $q$ denote the user's information need (e.g., dialogue context, question). We define a \textbf{relevance function} $\rho_q:\mathcal{E} \to \mathbb{R}_{\geq 0}$, where $\rho_q$ measures how relevant an evidence item $e$ is to $q$. We consider dialogue progress as a function of accumulated evidence. A \textbf{semantic progress measure} is a function $\mathcal{P}_q: 2^\mathcal{E} \to \mathbb{R}_{\geq 0}$ with marginal gain
\begin{align}
    \Delta_q(e \mid A) = \mathcal{P}_q(A \cup \{e\}) - \mathcal{P}_q(A)
\end{align} A dialogue prefix $D_{1:t}$ is scored by first extracting its accumulated evidence $A_t$, then evaluating $\mathcal P_q(A_t)$ relative to the information need $q$. A useful measure of semantic progress in information-seeking dialogue should satisfy \textbf{irrelevance} (if $\rho_q(e)=0$, then $\Delta_q(e \mid A)=0$), \textbf{monotonicity} ($\Delta_q(e \mid A) \geq 0$), and \textbf{diminishing returns} (if $A \subseteq B$ then $\Delta_q(e \mid A) \geq \Delta_q(e \mid B) $). These capture that progress accumulates with relevant evidence, does not decrease when adding information, and discounts redundancy.

\textbf{User Perspective and Agent-Induced Information Gain.} Dialogue information gain can be viewed from two perspectives. A third-party observer may learn from both user questions and agent answers. From the user's perspective, however, the question is already known and should condition the current information need. Since we evaluate agent progress, we adopt the user perspective.

Let $C_{t-1}^A \triangleq (q_1,a_1,\ldots,q_{t-1},a_{t-1})$ denote the context before the user’s $t$-th turn. After the user asks $q_t$, define $C_t^U \triangleq (C_{t-1}^A,q_t),$ and after the agent answers $a_t$, define $C_t^A \triangleq (C_t^U,a_t)$. Thus $C_t^U $is the context available immediately before the agent answer, and $C_t^A$ is the context after the full turn. Let $W$ be a latent task-relevant state. By the chain rule for mutual information,
\begin{align}
    I(W;D_{1:T})
=
\sum_{t=1}^T I(W;q_t\mid C_{t-1}^A)
+
\sum_{t=1}^T I(W;a_t\mid C_t^U).
\end{align} 
The first term is user-induced task specification; the second is agent-induced semantic progress. The second term is not meant to equal the user's subjective belief update, since
the user may have private knowledge not manifested in the transcript. Rather, we
use it as a transcript-level proxy for agent-induced progress: the information
added by the agent's answers after conditioning on the expressed user need. We therefore use
$G^\star(D_{1:T}) \triangleq \sum_{t=1}^T I(W; a_t \mid C^U_t)$
as an idealized transcript-level proxy for agent-induced semantic progress.
This decomposition allows $q_t$ to depend on earlier answers, since $C_{t-1}^A$ contains the full dialogue history. It also avoids penalizing the agent when the user makes the task broader or more ambiguous. Our estimator approximates $G^\star$ by treating agent answers as evidence weighted by relevance to the current user turn.

\textbf{A Hypothesis.}
Let $q$ denote a user information need, and let $D^A_{1:T}$ and $D^B_{1:T'}$ be two information seeking dialogues for the same $q$. Let $A^A_T$ and $A^B_T$ denote their accumulated evidence. Let $W$ be a latent random variable representing the task-relevant state of the world (i.e., the information needed to resolve the user’s request). We write $H(\cdot)$ for Shannon entropy of discrete variables and
$h(\cdot)$ for differential entropy of continuous variables.
We define the realized ideal semantic progress of an observed evidence accumulation $a_T$ as
\begin{align}
    \mathcal P^\star_q(a_T) \triangleq H(W\mid q)-H(W\mid q,A_T=a_T)
\end{align}
where its expectation over the random evidence accumulation $A_T$ is the conditional mutual information $\mathbb{E}_{A_T}[\mathcal P^\star_q(A_T)] = I(W;A_T\mid q).$
If human preference between the two dialogues is primarily driven by the accumulation of new, question relevant, non-redundant information, then the preferred dialogue is most likely to satisfy 
$\mathcal P_q^\star(a_T^A) > \mathcal P_q^\star(a_T^B)$  where \(a^A_T\) and \(a^B_T\) are the observed accumulated evidence sets. Since $\mathcal P_q^\star$ is not directly computable in open-ended dialogue, we test this hypothesis using a tractable estimator $\widehat{\mathcal P}_q$, predicting the dialogue with higher $\widehat{\mathcal P}_q$ as preferred and measuring agreement with human judgments.

\begin{figure}[t]
\centering
\begin{tikzpicture}[
    font=\small,
    >=latex,
    box/.style={
        draw=black!55,
        line width=0.7pt,
        rounded corners=6pt,
        align=center,
        minimum height=0.95cm,
        inner xsep=8pt,
        inner ysep=6pt,
        text width=2.85cm,
        outer sep=0pt,
        fill=#1
    },
    arrow/.style={
        -{Latex[length=2.2mm,width=1.7mm]},
        line width=0.9pt,
        draw=black!60,
        shorten >=2pt,
        shorten <=2pt
    }
]

\definecolor{cinput}{HTML}{F6F8FA}
\definecolor{cevidence}{HTML}{EEF6FF}
\definecolor{cweight}{HTML}{FFF8E8}
\definecolor{cupdate}{HTML}{F3F0FF}
\definecolor{cscore}{HTML}{FDEFF2}
\definecolor{ctotal}{HTML}{F7F7F7}

\node[box=cinput] (input) at (0,0)
{\textbf{Dialogue turn}\\[-1pt]$(q_t,a_t)$};

\node[box=cevidence] (evidence) at (3.85,0)
{\textbf{Evidence units}\\[-1pt]$\{Z_{t,i}\}_{i=1}^{m_t}$};

\node[box=cweight] (weight) at (7.70,0)
{\textbf{Embed \& weight}\\[-1pt]
$\mathbf q_t,\mathbf z_{t,i}$\\[-1pt]
$w_{t,i}$};

\node[box=ctotal] (total) at (0,-2.25)
{\textbf{Dialogue score}\\[-1pt]
$\displaystyle \widehat{\mathcal P}_q(D_{1:T})
=\sum_{t=1}^T\widehat{\mathrm{IG}}_t$};

\node[box=cscore] (score) at (3.85,-2.25)
{\textbf{Information gain}\\[-1pt]
$\displaystyle \widehat{\mathrm{IG}}_t=
\frac12\log\frac{\det\mathbf J'}{\det\mathbf J}$};

\node[box=cupdate] (update) at (7.70,-2.25)
{\textbf{Precision update}\\[-1pt]
$\displaystyle \mathbf J'=\mathbf J+\sum_i
\frac{w_{t,i}}{\sigma^2}
\mathbf z_{t,i}\mathbf z_{t,i}^{\top}$};

\draw[arrow] (input.east) -- (evidence.west);
\draw[arrow] (evidence.east) -- (weight.west);
\draw[arrow] (weight.south) -- (update.north);
\draw[arrow] (update.west) -- (score.east);
\draw[arrow] (score.west) -- (total.east);

\node[
    font=\footnotesize\itshape,
    text=black!65
] at (3.85,0.78)
{for each turn $t=1,\ldots,T$};

\end{tikzpicture}

\caption{
Algorithmic overview of Gaussian information-gain evaluation. Each answer is split into evidence units, embedded, weighted by question relevance, and incorporated through a weighted precision update. The resulting log-determinant change gives turn-level information gain, which is summed into the dialogue-level semantic progress score.
}
\label{fig:overview}
\end{figure}

\section{Our Method}
As an idealized symbolic analogy, suppose there is a finite hypothesis space of
possible worlds, a fixed background theory, a uniform posterior over worlds
consistent with the accumulated facts $F_t$, and deterministic evidence that only
rules out inconsistent worlds. If $\mathcal M(F_t)$ denotes the surviving worlds
after turn $t$, then the turn-level gain is
$\IG_t = \log \frac{|\mathcal M(F_{t-1})|}{|\mathcal M(F_t)|}$. This captures uncertainty reduction, telescoping progress, and diminishing returns for repeated evidence. Since natural-language does not provide explicit facts, a background theory, or tractable model counts, we approximate this idea with embeddings. \Cref{fig:overview} summarizes the proposed pipeline, showing how dialogue turns are decomposed into evidence, embedded, and aggregated via precision updates. The full procedure is given in~\Cref{alg:ig} (Appendix~\ref{app:exp}).
The ideal score $\mathcal P_q^\star(a_T)$ measures information gained about a latent task-relevant state, but requires symbolic evidence, a posterior over possible worlds, and model counting. Our estimator replaces this symbolic uncertainty with Gaussian uncertainty in embedding space. The resulting score, denoted $\widehat{\mathcal P}_q(D_{1:T})$, is deterministic, reference-free, and computed from the dialogue text alone.

\subsection{Gaussian Approximation to Information Gain}
\label{sec:approx_ig}

For each turn $t$, we extract a finite collection of evidence strings $\mathrm{Z}_t \triangleq \{\mathrm{Z}_{t,1},\ldots,\mathrm{Z}_{t,m_t}\}\subseteq \mathcal A^* $ from the answer $a_t$. In our experiments, these evidence strings are sentence-level units (details later in the paper) 
though other deterministic segmentations can be used. This gives a concrete approximation to the abstract evidence set $E_t $. We write $\enc:\mathcal A^*\to \mathbb{R}^d$ for a fixed text embedding function, and denote
\begin{align*}
 \mathbf{q}_t \triangleq \enc(q_t)\in\mathbb{R}^d \quad \mathbf{z}_{t,i} \triangleq \enc(\mathrm{Z}_{t,i})\in\mathbb{R}^d .
\end{align*}
for the question and evidence embeddings. We introduce a latent semantic uncertainty variable
$\mathbf S\in\mathbb R^d$,
which represents unresolved semantic uncertainty in the embedding space. This variable is an abstract modeling device, not a literal world state or fact representation. We place an isotropic Gaussian prior
\begin{align*}
\mathbf{S}\sim \mathcal{N}(\mathbf{0},\mathbf{\Sigma}_0),
\qquad \mathbf{\Sigma}_0 \triangleq \sigma_0^2 \mathbf{I}.
\end{align*}
We justify the resulting log-determinant score through a maximum-entropy argument, formalized in \Cref{thm:gaussian-maxent-upper-bound}. Among continuous distributions with fixed covariance, the Gaussian maximizes differential entropy.
Since our estimator retains only second-order embedding statistics, Gaussian entropy gives a least-committal uncertainty surrogate, and entropy reduction becomes a log-determinant covariance ratio.

Each evidence embedding induces a scalar linear measurement of the \textbf{latent semantic state}:
\begin{align}
    y_{t, i} = \mathbf{z}_{t,i}^\top \mathbf{S}   + \bm{\varepsilon}_{t,i},
 \quad \varepsilon_{t,i} \sim \mathcal{N}(0, \sigma^2)
,
\end{align}
Appendix~\ref{app:queries1d} provides intuition for viewing an embedding direction as a one-dimensional measurement of $\mathbf S$. Since the posterior covariance in Bayesian linear regression depends on the measurement directions and noise variances, but not on the observed scalar values, the specific values $y_{t,i}$ do not need to be observed or estimated; see Appendix~\ref{appendix:blr}. To instantiate \textbf{question-conditioned relevance}, we assign each evidence unit a nonnegative weight $w_{t, i} = r^{\mathrm{rel}}_{t, i}r^{\mathrm {qual}}_{t, i}$ where
\begin{align}
    r^{\mathrm{rel}}_{t, i}
=
\max\!\left(0,\langle \widehat{\mathbf q}_t,\widehat{\mathbf z}_{t,i}\rangle\right)^\beta 
\end{align}
where $\widehat{\mathbf v}=\mathbf v/\|\mathbf v\|_2$ and $r^{\mathrm{qual}}_{t, i} \in [0, 1]$ is an optional external evidence-quality factor. In this work, set $r^{\mathrm{qual}}_{t, i} = 1$ and evaluate relevance-weighted semantic progress only. Optionally, we set $w_{t,i}=0$ when $w_{t,i}<\eta$. The weight $w_{t,i}$ is the embedding-space instantiation of the relevance function $\rho_q(e)$: evidence weakly aligned with the question contributes little or no posterior update, while aligned evidence reduces uncertainty along its semantic direction. 

Let $\mathbf{J}_t \triangleq \mathbf{\Sigma}_t^{-1}$ denote the precision matrix of the posterior
distribution over the latent semantic uncertainty variable $\mathbf{S}$ after incorporating
evidence up to turn $t$.
For nonnegative evidence weights $w_{t,i}\ge 0$, the posterior precision updates additively:

\begin{align}
\label{eq:precision_update}
\mathbf{J}_t = \mathbf{J}_{t-1} + \sum_{i=1}^{m_t} \frac{w_{t,i}}{\sigma^2}\,\mathbf{z}_{t,i}\mathbf{z}_{t,i}^\top .
\end{align}
This precision update is the information-form posterior covariance update for Bayesian
linear regression with evidence embeddings as measurement directions. The relevance weights
can be equivalently interpreted as heteroscedastic observation noise, with
$\varepsilon_{t,i}\sim\mathcal N(0,\sigma^2/w_{t,i})$ for $w_{t,i}>0$, and $w_{t,i}=0$
contributing no update. Appendix~\ref{appendix:blr} gives the derivation. From now on, denote $\alpha_{t, i} = \frac{w_{t,i}}{\sigma^2}$. The differential entropy of a $d$-dimensional Gaussian $\mathcal{N}(\bm{\mu},\mathbf{\Sigma})$ is
\begin{align}
h(\mathbf{S}) = \frac{d}{2}\log(2\pi e)+\frac{1}{2}\log\det(\mathbf{\Sigma}).
\end{align}
Since the first term is constant in $t$, the per-turn information gain induced by turn $t$ is
\begin{align}
\label{eq:gauss_ig}
\IG_t 
\triangleq h(\mathbf{S}\mid \mathrm{Z}_{1:t-1}) - h(\mathbf{S}\mid \mathrm{Z}_{1:t})
= \frac{1}{2}\log \frac{\det(\mathbf{\Sigma}_{t-1})}{\det(\mathbf{\Sigma}_t)} .
\end{align}
which gives us the dialogue level estimator: $\widehat{\mathcal P}_q(D_{1:T})
\triangleq
\sum_{t=1}^T \widehat{\mathrm{IG}}_t.$

The log-determinant score can be motivated independently of the Gaussian posterior model. In open-ended dialogue, we do not observe symbolic facts or a tractable posterior over possible worlds; instead, our estimator $\IG$ retains only second-order structure of the evidence embeddings. Under such a covariance-only description, the Gaussian distribution is the maximum-entropy distribution, so $\log\det\bm \Sigma$ is the least-committal surrogate for uncertainty volume. \Cref{thm:gaussian-maxent-upper-bound} formalizes this. It shows that, when evidence is represented only through its empirical covariance, the Gaussian log-determinant quantity upper-bounds the information that
the evidence can reveal about the latent semantic state, up to a conditional-uncertainty term.
\begin{theorem}
    \label{thm:gaussian-maxent-upper-bound}
    For a fixed turn $t \in [T]$, let $\mathbf{Z}$ be the random evidence vector corresponding to $\mathbf{z} = \langle \mathbf{z}_1, \ldots,  \mathbf{z}_m \rangle \in \mathbb{R}^{md}$ with evidence components $\mathbf{Z}_i \in \mathbb{R}^d$. If $h(\mathbf{Z} \mid \mathbf{S})\geq c$, where $c$ is independent of the dialogue strategy, and the evidence has common empirical covariance $\mathbf{\Sigma}_\mathbf{Z} \succ \mathbf{0}$, then:
    \begin{align}
        I(\mathbf{Z};\mathbf{S}) \leq m \big [\frac{d}{2}\log(2\pi e) + \frac{1}{2}\log\det \mathbf{\Sigma}_\mathbf{Z} \big ] -c 
    \end{align}
\end{theorem}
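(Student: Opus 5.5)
The plan is to write the mutual information as a difference of differential entropies, $I(\mathbf Z;\mathbf S)=h(\mathbf Z)-h(\mathbf Z\mid\mathbf S)$. The hypothesis $h(\mathbf Z\mid\mathbf S)\ge c$ then immediately gives
\begin{align*}
I(\mathbf Z;\mathbf S)\le h(\mathbf Z)-c,
\end{align*}
so the whole task reduces to an upper bound on the joint entropy $h(\mathbf Z)$ of the $md$-dimensional evidence vector.

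For that bound I would first use subadditivity of differential entropy, $h(\mathbf Z)=h(\mathbf Z_1,\ldots,\mathbf Z_m)\le\sum_{i=1}^m h(\mathbf Z_i)$. This follows from the chain rule together with the fact that conditioning does not increase entropy; equivalently, the total correlation is nonnegative. Each component $\mathbf Z_i\in\mathbb R^d$ has covariance $\mathbf\Sigma_{\mathbf Z}\succ\mathbf 0$ by the common-covariance assumption, which I read as every component sharing this second-order description. The maximum-entropy property of the Gaussian then gives
\begin{align*}
h(\mathbf Z_i)\le\frac d2\log(2\pi e)+\frac12\log\det\mathbf\Sigma_{\mathbf Z}.
\end{align*}
This inequality is the step I would actually prove. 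Let $\phi$ be the density of $\mathcal N(\bm\mu_i,\mathbf\Sigma_{\mathbf Z})$, where $\bm\mu_i$ is the mean of $\mathbf Z_i$, and let $p$ be the density of $\mathbf Z_i$. Then
\begin{align*}
0\le D(p\,\|\,\phi)=-h(\mathbf Z_i)-\mathbb E_p[\log\phi(\mathbf Z_i)].
\end{align*}
Since $\log\phi$ is a quadratic form, its expectation under $p$ depends only on the first two moments of $\mathbf Z_i$, and these match those of $\phi$. Hence $-\mathbb E_p[\log\phi]$ equals the Gaussian entropy, and the inequality follows. Summing over the $m$ components and substituting into the first display yields the stated bound.

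The main obstacle is interpretive rather than technical: the theorem speaks of an \emph{empirical} covariance, while the entropy bound needs the covariance of the random vector $\mathbf Z_i$. I would handle this by stating explicitly that each $\mathbf Z_i$ is modeled as a random vector whose second moments are given by $\mathbf\Sigma_{\mathbf Z}$. That is, the distribution of the evidence is constrained only through this covariance, which is exactly the covariance-only description the surrounding text invokes.

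A minor technical point is that $h(\mathbf Z)$ must exist, meaning $\mathbf Z$ has a density. This is implicit in the hypothesis that $h(\mathbf Z\mid\mathbf S)$ is finite and bounded below. If $h(\mathbf Z)$ were $-\infty$, the inequality would hold trivially.

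I would also note that subadditivity discards the dependence between the components. The bound is therefore tight only when the evidence components are independent Gaussians, which is consistent with presenting the result as an upper bound.
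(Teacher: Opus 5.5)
Your proof is correct, and after the shared first step $I(\mathbf Z;\mathbf S)=h(\mathbf Z)-h(\mathbf Z\mid\mathbf S)\le h(\mathbf Z)-c$ it takes a different route from the paper. The paper discretizes $\mathbb R^d$ into bins of side $\delta$ and bounds the entropy of the sequence with a two-part code: the type costs at most $|\mathcal A|\log(m+1)$ bits, and the index within the type class costs at most $m\,H(\widehat P_{\mathbf z})$ bits. It then applies Gaussian maximum entropy to the bin-wise constant density of the empirical distribution, whose second-moment matrix is $\widehat{\bm\Sigma}_{\mathbf z}=\frac1m\sum_i\mathbf z_i\mathbf z_i^\top$, and lets the $\log\delta$ terms cancel against the discretized conditional entropy. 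You stay in the continuous setting and use subadditivity plus a per-component maximum-entropy bound.

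Your route buys exactness: it gives the stated inequality with no discretization and no overhead. The coding argument only gives it up to the additive term $|\mathcal A|\log(m+1)$ and a bin approximation, where $|\mathcal A|$ is the number of bins and grows as $\delta\to 0$. That is why the paper ends by calling the bound a large-$m$ surrogate. The paper's route buys a hypothesis phrased in terms of the realized sequence: $\widehat{\bm\Sigma}_{\mathbf z}$ is what the estimator actually computes. You instead replace ``common empirical covariance'' by a population covariance imposed on each $\mathbf Z_i$, which is a different assumption, as you acknowledge.

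Your method can absorb the paper's reading as well. Take $J$ uniform on $\{1,\ldots,m\}$ and independent of $\mathbf Z$. Then $\sum_i h(\mathbf Z_i)=m\,h(\mathbf Z_J\mid J)\le m\,h(\mathbf Z_J)$ and $\mathbb E[\mathbf Z_J\mathbf Z_J^\top]=\mathbb E[\widehat{\bm\Sigma}_{\mathbf Z}]$. Maximum entropy under a second-moment constraint then gives $h(\mathbf Z)\le m\bigl[\frac d2\log(2\pi e)+\frac12\log\det\mathbb E[\widehat{\bm\Sigma}_{\mathbf Z}]\bigr]$. This is the stated bound whenever the empirical second moment equals $\bm\Sigma_{\mathbf Z}$ on average, again with no type-encoding term.

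Finally, your aside about $h(\mathbf Z)=-\infty$ is moot, since $h(\mathbf Z)\ge h(\mathbf Z\mid\mathbf S)\ge c$.
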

We prove this in \Cref{app:proof_max_ent}. This
reflects the information-theoretic principle that high-information evidence must be sufficiently non-compressible. In our setting, larger effective covariance volume corresponds to evidence that can potentially carry more information about the latent semantic state.

This estimator $\IG$ follows the same structure as the ideal symbolic formulation: evidence accumulates over turns, uncertainty decreases through posterior updates, and dialogue-level progress is the total reduction in uncertainty. The next results show that the estimator preserves the desired structural properties of semantic progress.

\subsection{Structural Properties of the Estimator}
The Gaussian estimator inherits the structural properties desired of a semantic progress measure. Because each evidence item contributes a nonnegative rank-one update to the precision matrix, posterior uncertainty can only decrease. This gives monotonicity and a telescoping dialogue-level score. Proofs of the theorems are provided in Appendix~\ref{appendix:proof_monotonicity}, and Appendix~\ref{appendix:proof_submodularity}.

\begin{theorem}[Monotonicity and Telescoping]
\label{thm:monotonicity}
Assume $\mathbf{\Sigma}_0 \succ \mathbf{0}$ and weights $\alpha_{t,i}\ge 0$.
Under the precision update in \Cref{eq:precision_update}, we have
$\mathbf{\Sigma}_t \preceq \mathbf{\Sigma}_{t-1}$ (in PSD order) and hence $\IG_t\ge 0$ for all $t$.
Moreover, the total gain telescopes:
\begin{align}
\sum_{t=1}^T \IG_t
=
\frac{1}{2}\log \frac{\det(\mathbf{\Sigma}_0)}{\det(\mathbf{\Sigma}_T)} ,
\end{align}
so it depends only on the initial and final posterior covariances.
\end{theorem}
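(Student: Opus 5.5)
The plan is to work entirely with precision matrices and transfer the conclusions to covariances at the end.

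\emph{Step 1: positive definiteness.} Set $\mathbf{J}_0 = \mathbf{\Sigma}_0^{-1} \succ \mathbf{0}$. The update in \Cref{eq:precision_update} writes $\mathbf{J}_t - \mathbf{J}_{t-1} = \sum_i \alpha_{t,i}\mathbf{z}_{t,i}\mathbf{z}_{t,i}^\top$. Each summand satisfies $\mathbf{x}^\top(\alpha\,\mathbf{z}\mathbf{z}^\top)\mathbf{x} = \alpha(\mathbf{z}^\top\mathbf{x})^2 \ge 0$ because $\alpha_{t,i}\ge 0$. Hence $\mathbf{J}_t \succeq \mathbf{J}_{t-1}$. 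By induction, $\mathbf{J}_t \succeq \mathbf{J}_0 \succ \mathbf{0}$, so every $\mathbf{J}_t$ is positive definite. In particular, $\mathbf{\Sigma}_t = \mathbf{J}_t^{-1}$ is well defined and positive definite, and all the log-determinants that appear below are finite.

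\emph{Step 2: from precisions to covariances.} The one genuinely non-trivial step is that matrix inversion reverses the Loewner order on positive definite matrices: $\mathbf{A} \succeq \mathbf{B} \succ \mathbf{0}$ implies $\mathbf{B}^{-1} \succeq \mathbf{A}^{-1}$. I would prove this by congruence.
\begin{itemize}
\item Write $\mathbf{C} = \mathbf{B}^{-1/2}\mathbf{A}\mathbf{B}^{-1/2}$. Conjugating $\mathbf{A} \succeq \mathbf{B}$ by $\mathbf{B}^{-1/2}$ gives $\mathbf{C} \succeq \mathbf{I}$, so every eigenvalue of $\mathbf{C}$ is at least $1$.
\item Therefore every eigenvalue of $\mathbf{C}^{-1}$ is at most $1$, i.e.\ $\mathbf{C}^{-1} \preceq \mathbf{I}$.
\item Conjugating back, $\mathbf{B}^{1/2}\mathbf{A}^{-1}\mathbf{B}^{1/2} \preceq \mathbf{I}$, which rearranges to $\mathbf{A}^{-1} \preceq \mathbf{B}^{-1}$.
\end{itemize}
Applying this with $\mathbf{A} = \mathbf{J}_t$ and $\mathbf{B} = \mathbf{J}_{t-1}$ gives $\mathbf{\Sigma}_t \preceq \mathbf{\Sigma}_{t-1}$.

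\emph{Step 3: nonnegativity of the gain.} Compute the determinant ratio via the precisions: $\det \mathbf{\Sigma}_{t-1}/\det\mathbf{\Sigma}_t = \det\mathbf{J}_t/\det\mathbf{J}_{t-1}$. Factor
\[
\mathbf{J}_t = \mathbf{J}_{t-1}^{1/2}\bigl(\mathbf{I} + \mathbf{M}\bigr)\mathbf{J}_{t-1}^{1/2},
\qquad
\mathbf{M} = \mathbf{J}_{t-1}^{-1/2}\Bigl(\textstyle\sum_i \alpha_{t,i}\mathbf{z}_{t,i}\mathbf{z}_{t,i}^\top\Bigr)\mathbf{J}_{t-1}^{-1/2}.
\]
The matrix $\mathbf{M}$ is PSD, so $\mathbf{I} + \mathbf{M}$ has all eigenvalues at least $1$ and $\det(\mathbf{I} + \mathbf{M}) \ge 1$. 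The ratio is therefore at least $1$, and $\IG_t = \tfrac12\log\det(\mathbf{I} + \mathbf{M}) \ge 0$.

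Alternatively, one can argue directly from Step 2: $\mathbf{\Sigma}_t \preceq \mathbf{\Sigma}_{t-1}$ gives eigenvalue domination through Weyl/Courant--Fischer, hence $\det\mathbf{\Sigma}_t \le \det\mathbf{\Sigma}_{t-1}$.

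\emph{Step 4: telescoping.} Writing $\IG_t = \tfrac12\bigl[\log\det\mathbf{\Sigma}_{t-1} - \log\det\mathbf{\Sigma}_t\bigr]$, the sum over $t = 1,\ldots,T$ collapses to $\tfrac12\bigl[\log\det\mathbf{\Sigma}_0 - \log\det\mathbf{\Sigma}_T\bigr]$. This is purely algebraic; it only needs the determinants to be finite and positive, which Step 1 guarantees.

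The main obstacle, modest as it is, is Step 2: the order reversal under inversion. Everything else is bookkeeping once positive definiteness is secured in Step 1.
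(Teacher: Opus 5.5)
Your proof is correct and follows essentially the same route as the paper: PSD rank-one increments give $\mathbf{J}_t \succeq \mathbf{J}_{t-1} \succ \mathbf{0}$ by induction, order reversal under inversion gives $\mathbf{\Sigma}_t \preceq \mathbf{\Sigma}_{t-1}$, determinant monotonicity gives $\IG_t \ge 0$, and the sum telescopes. Your only additions are that you prove the order-reversal fact, which the paper simply invokes, and that you get nonnegativity from $\det(\mathbf{I}+\mathbf{M}) \ge 1$ on the precision side, which is equivalent to the paper's determinant-monotonicity step.
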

Thus, adding evidence cannot reduce the score, and the dialogue-level value is exactly the total reduction in posterior uncertainty. We next show that the estimator also discounts redundancy. Let $\mathcal X$ be a finite set of evidence items, where each item $i\in\mathcal X$ has embedding $\mathbf z_i\in\mathbb R^d$ and weight $\alpha_i\ge 0$. For any subset $\mathcal S\subseteq\mathcal X$, define
$\mathbf J(\mathcal S)
=
\mathbf J_0
+
\sum_{i\in\mathcal S}
\alpha_i\mathbf z_i\mathbf z_i^\top$ and 
$F(\mathcal S)
=
\log\det \mathbf J(\mathcal S)$.

\begin{theorem}[Diminishing Returns]
\label{thm:submodularity}
Assume $\mathbf J_0\succ 0$. The set function $F$ is monotone submodular. That is, for all
$A\subseteq B\subseteq\mathcal X $and $x\notin B$,
\begin{align*}
    F(A\cup\{x\})-F(A)
\ge
F(B\cup\{x\})-F(B).
\end{align*}
\end{theorem}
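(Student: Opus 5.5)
The plan is to compute the marginal gain of a single item in closed form using the matrix determinant lemma, and then compare it across nested sets through the Loewner order. Fix $\mathcal S\subseteq\mathcal X$ and $x\notin\mathcal S$. Since $\mathbf J_0\succ 0$ and every $\alpha_i\mathbf z_i\mathbf z_i^\top\succeq 0$, the matrix $\mathbf J(\mathcal S)$ is positive definite and hence invertible. Adding $x$ is a rank-one update, $\mathbf J(\mathcal S\cup\{x\})=\mathbf J(\mathcal S)+\alpha_x\mathbf z_x\mathbf z_x^\top$, so the matrix determinant lemma gives
\begin{align*}
F(\mathcal S\cup\{x\})-F(\mathcal S)=\log\bigl(1+\alpha_x\,\mathbf z_x^\top \mathbf J(\mathcal S)^{-1}\mathbf z_x\bigr).
\end{align*}
Monotonicity follows at once. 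Because $\mathbf J(\mathcal S)^{-1}\succ 0$ and $\alpha_x\ge 0$, the argument of the logarithm is at least $1$, so the gain is nonnegative. This is the same fact underlying \Cref{thm:monotonicity}.

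For submodularity, take $A\subseteq B$ and $x\notin B$. We have
\begin{align*}
\mathbf J(B)=\mathbf J(A)+\sum_{i\in B\setminus A}\alpha_i\mathbf z_i\mathbf z_i^\top\succeq \mathbf J(A)\succ 0.
\end{align*}
Matrix inversion is operator-antitone on positive definite matrices, so $\mathbf J(B)^{-1}\preceq \mathbf J(A)^{-1}$. Evaluating both quadratic forms at $\mathbf z_x$ then gives
\begin{align*}
\mathbf z_x^\top\mathbf J(B)^{-1}\mathbf z_x\le \mathbf z_x^\top\mathbf J(A)^{-1}\mathbf z_x.
\end{align*}
Since $u\mapsto\log(1+\alpha_x u)$ is nondecreasing for $\alpha_x\ge 0$, the marginal gain with respect to $B$ is at most the marginal gain with respect to $A$, which is the claimed inequality.

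The only step that needs real care is the antitonicity of inversion. I will prove it directly. Suppose $\mathbf 0\prec \mathbf P\preceq \mathbf Q$, and set $\mathbf M=\mathbf P^{-1/2}\mathbf Q\mathbf P^{-1/2}$. Then $\mathbf M\succeq\mathbf I$, so every eigenvalue of $\mathbf M$ is at least $1$. Hence every eigenvalue of $\mathbf M^{-1}=\mathbf P^{1/2}\mathbf Q^{-1}\mathbf P^{1/2}$ is at most $1$, i.e. $\mathbf P^{1/2}\mathbf Q^{-1}\mathbf P^{1/2}\preceq\mathbf I$. Conjugating by $\mathbf P^{-1/2}$ preserves the Loewner order and yields $\mathbf Q^{-1}\preceq\mathbf P^{-1}$.

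Alternatively, one could add the items of $B\setminus A$ one at a time and apply Sherman--Morrison at each step. Each such step subtracts a PSD rank-one term from the inverse, which again shows that $\mathbf J(B)^{-1}\preceq\mathbf J(A)^{-1}$.
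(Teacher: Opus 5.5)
Your proposal is correct and follows essentially the same route as the paper. Both use the matrix determinant lemma to write the marginal gain as $\log(1+\alpha_x\mathbf z_x^\top\mathbf J(\mathcal S)^{-1}\mathbf z_x)$, then apply the antitonicity of inversion under the Loewner order together with the monotonicity of $u\mapsto\log(1+\alpha_x u)$; your direct proof of antitonicity, and your derivation of monotonicity from the closed-form gain, only make explicit facts the paper takes as given.
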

Since total Gaussian information gain is proportional to $\log\det \mathbf J(\mathcal S)$ up to constants, \Cref{thm:submodularity} implies diminishing marginal returns: evidence contributes less when similar evidence has already been accumulated. This is the formal mechanism by which the estimator avoids rewarding redundancy or verbosity alone. 

\textbf{Implication.} The estimator exhibits the desired properties for semantic-progress evaluation: information gains compose across turns, irrelevant evidence contributes negligibly, and redundant evidence yields diminishing marginal returns. Consequently, high scores arise only from sustained, question-relevant reduction in semantic uncertainty, rather than verbosity or repetition.

\subsection{Local Behavior: Irrelevance and Repetition}
The previous subsection establishes global properties. We now make the local behavior of a single evidence item explicit. For an evidence embedding $\mathbf z$ with weight $\alpha\ge 0$, define its marginal contribution at precision $\mathbf J$ as
\begin{align}
\label{eq:delta_ig}
\Delta(\mathbf z;\mathbf J)
&\triangleq
\log\det(\mathbf J+\alpha\mathbf z\mathbf z^\top)
-
\log\det(\mathbf J) \\
&=
\log\!\left(1+\alpha\mathbf z^\top\mathbf J^{-1}\mathbf z\right),
\end{align}
where we used the matrix determinant lemma in the last equality. Thus, if the evidence is completely irrelevant under the question-conditioned weighting, then $\alpha=0$ and its gain is exactly zero. If $\alpha$ is small, the gain is correspondingly bounded. Proofs of the lemmas are provided in Appendix~\ref{appendix:soft_irrelev} and Appendix~\ref{appendix:proof_redundancy}.
\begin{lemma}[Soft Irrelevance]
\label{lemma:soft_irrelev}
Assume embeddings are norm bounded $\|\mathbf{z}\| \leq B$, and $\alpha\leq \varepsilon$.
When $\mathbf{z}$ is irrelevant compared to the question $\mathbf{q}$, then information gain for that turn is bounded by $\log(1+\alpha\lambda_{\max}(\mathbf{J}_0^{-1})B^2) = \mathcal{O}(\varepsilon)$.
\end{lemma}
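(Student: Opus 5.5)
The plan is to start from the closed form \Cref{eq:delta_ig} for the marginal contribution of a single evidence item, $\Delta(\mathbf z;\mathbf J)=\log(1+\alpha\,\mathbf z^\top\mathbf J^{-1}\mathbf z)$, which follows from the matrix determinant lemma. The gain of that item toward the turn's $\IG$ is then $\tfrac12\Delta$, or $\Delta$ if one measures in $\log\det$ units. Since the statement's bound omits the factor $\tfrac12$, I will prove the stronger inequality for $\Delta$ itself. The per-turn factor only improves the bound.

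\textbf{Step 1: bound the quadratic form uniformly over the history.} The precision at which $\mathbf z$ is added is $\mathbf J=\mathbf J_0+\sum \alpha_j\mathbf z_j\mathbf z_j^\top$ with $\alpha_j\ge 0$, so $\mathbf J\succeq \mathbf J_0\succ\mathbf 0$. Operator monotonicity of inversion on positive definite matrices then gives $\mathbf J^{-1}\preceq\mathbf J_0^{-1}$; this is the same fact used for \Cref{thm:monotonicity}. Hence
\[
\mathbf z^\top\mathbf J^{-1}\mathbf z\le \mathbf z^\top\mathbf J_0^{-1}\mathbf z\le \lambda_{\max}(\mathbf J_0^{-1})\|\mathbf z\|^2\le \lambda_{\max}(\mathbf J_0^{-1})B^2,
\]
using the Rayleigh quotient bound and the norm assumption. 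Because $\log(1+x)$ is increasing and $\alpha\ge 0$, it follows that $\Delta\le\log(1+\alpha\lambda_{\max}(\mathbf J_0^{-1})B^2)$.

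\textbf{Step 2: the $\mathcal O(\varepsilon)$ rate.} Using $\log(1+x)\le x$ and $\alpha\le\varepsilon$, the bound becomes at most $\varepsilon\,\lambda_{\max}(\mathbf J_0^{-1})B^2$. With $\mathbf J_0=\sigma_0^{-2}\mathbf I$ this equals $\varepsilon\sigma_0^2B^2$, a constant independent of the dialogue times $\varepsilon$.

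\textbf{Step 3: interpret ``irrelevant'' and extend to a whole turn.} ``Irrelevant'' is read as meaning the weight $\alpha=w/\sigma^2$ is at most $\varepsilon$, which is what the relevance weighting $r^{\mathrm{rel}}$ enforces for small cosine similarity. If the turn has several irrelevant items, I will add them one at a time. The telescoping of log-determinants from \Cref{thm:monotonicity} then bounds the turn's total gain by $m_t$ times the single-item bound, which is still $\mathcal O(\varepsilon)$ for fixed $m_t$.

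I expect the main obstacle to be this interpretive step rather than the algebra. The claim is per item, or per turn with a bounded number of items, and I have to make sure the bound holds uniformly in the previously accumulated precision. Step 1's Loewner-order argument $\mathbf J^{-1}\preceq\mathbf J_0^{-1}$ is exactly what makes it uniform.
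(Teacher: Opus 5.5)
Your proposal is correct and matches the paper's proof essentially step for step. Both arguments use $\mathbf J\succeq\mathbf J_0\succ\mathbf 0$, order reversal under inversion to get $\mathbf J^{-1}\preceq\mathbf J_0^{-1}$, the Rayleigh quotient bound with $\|\mathbf z\|\le B$, and then $\log(1+C\varepsilon)=\mathcal O(\varepsilon)$. Your remarks on the factor $\tfrac12$ and on summing the per-item bound over the $m_t$ irrelevant items of a turn are valid refinements that the paper leaves implicit.
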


\begin{lemma}[Redundancy leads to diminishing returns]
\label{lemma:redundancy}
Consider repeatedly adding the same evidence vector $z$ for $k$ times with the same $\alpha$. Then, $\Delta(\mathbf{z};\mathbf{J}_{k}) \geq \Delta(\mathbf{z};\mathbf{J}_{k+1}) $
\end{lemma}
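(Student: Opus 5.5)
We plan a direct computation using the closed form of the marginal contribution in \Cref{eq:delta_ig}. Write $\mathbf J_k = \mathbf J_0 + k\alpha\mathbf z\mathbf z^\top$ for the precision after $k$ copies of $\mathbf z$ have been added; the same argument works if $\mathbf J_0$ is replaced by any $\mathbf J\succ 0$ accumulated from earlier turns. Then $\mathbf J_{k+1} = \mathbf J_k + \alpha\mathbf z\mathbf z^\top$. By the matrix determinant lemma,
\begin{align*}
\Delta(\mathbf z;\mathbf J_k) = \log\!\left(1+\alpha\,\mathbf z^\top\mathbf J_k^{-1}\mathbf z\right).
\end{align*}
Since $\log(1+\alpha x)$ is nondecreasing in $x\ge 0$ for $\alpha\ge 0$, it suffices to show $u_{k+1}\le u_k$, where $u_k \triangleq \mathbf z^\top\mathbf J_k^{-1}\mathbf z\ge 0$.

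For this comparison we use the Sherman--Morrison formula. Because $\mathbf J_k\succ 0$ (it is $\mathbf J_0\succ 0$ plus PSD terms),
\begin{align*}
\mathbf J_{k+1}^{-1} = \mathbf J_k^{-1} - \frac{\alpha\,\mathbf J_k^{-1}\mathbf z\mathbf z^\top\mathbf J_k^{-1}}{1+\alpha\,\mathbf z^\top\mathbf J_k^{-1}\mathbf z}.
\end{align*}
Taking the quadratic form in $\mathbf z$ gives the scalar recursion
\begin{align*}
u_{k+1} = u_k - \frac{\alpha u_k^2}{1+\alpha u_k} = \frac{u_k}{1+\alpha u_k}\le u_k .
\end{align*}
This proves the lemma, with equality exactly when $\alpha u_k=0$, i.e.\ when the item was already irrelevant. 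Iterating the recursion also gives the explicit form $u_k = u_0/(1+k\alpha u_0)$. Hence $\Delta(\mathbf z;\mathbf J_k)=\log\!\big(1+\alpha u_0/(1+k\alpha u_0)\big)$, which decays like $1/k$. This makes the ``diminishing returns'' quantitative.

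As a cross-check, one can cite \Cref{thm:submodularity} with $A$ the first $k$ copies and $B$ the first $k+1$, treating copies as distinct items. Alternatively, one can use $\mathbf J_{k+1}\succeq\mathbf J_k\Rightarrow\mathbf J_{k+1}^{-1}\preceq\mathbf J_k^{-1}$, which is operator monotonicity of inversion as in \Cref{thm:monotonicity}. The only mildly delicate point is justifying invertibility and the positivity of the denominator $1+\alpha u_k>0$, which follow from $\mathbf J_0\succ 0$ and $\alpha\ge 0$. I do not expect a genuine obstacle.
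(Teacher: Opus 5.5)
Your proof is correct, but its main step differs from the paper's. The paper never computes $\mathbf J_{k+1}^{-1}$. It notes $\mathbf J_{k+1}=\mathbf J_k+\alpha\mathbf z\mathbf z^\top\succeq\mathbf J_k\succ 0$ and applies order reversal under inversion to get $\mathbf J_{k+1}^{-1}\preceq\mathbf J_k^{-1}$. It then reads off $\mathbf z^\top\mathbf J_{k+1}^{-1}\mathbf z\le\mathbf z^\top\mathbf J_k^{-1}\mathbf z$ and finishes with monotonicity of $x\mapsto\log(1+\alpha x)$. This is exactly the route you list as your second cross-check. You instead use Sherman--Morrison to obtain the exact recursion $u_{k+1}=u_k/(1+\alpha u_k)$, equivalently $1/u_{k+1}=1/u_k+\alpha$ when $u_k>0$, and that computation is right.

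The paper's route buys brevity and generality. It uses only that the increment is PSD, so the same lines show $\Delta(\mathbf z;\mathbf J)$ is nonincreasing under \emph{any} additional evidence, not just exact repeats. It is really the submodularity argument specialized to copies.

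Your route buys quantitative content the paper does not state. First, the closed form $\Delta(\mathbf z;\mathbf J_k)=\log\frac{1+(k+1)\alpha u_0}{1+k\alpha u_0}$, with $\Theta(1/k)$ decay when $\alpha u_0>0$. Second, the exact equality case $\alpha u_0=0$, i.e.\ $\alpha=0$ or $\mathbf z=\mathbf{0}$. Third, by telescoping, $k$ repetitions yield total gain only $\log(1+k\alpha u_0)$, a sharper statement of how redundancy is discounted. That cumulative formula also follows in one line from the matrix determinant lemma applied to $\mathbf J_0+k\alpha\mathbf z\mathbf z^\top$, which would let you bypass the recursion entirely.
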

Together, these lemmas show that weakly relevant evidence contributes little, while repeated evidence has decreasing marginal value. In Appendix~\ref{appendix:toy_examples}, we verify these behaviors in controlled synthetic dialogues, including cases where novel information is reintroduced after redundant or irrelevant turns.

\textbf{Implication.} These local properties are well-suited for semantic-progress evaluation: irrelevant content yields negligible gain, while repeated or paraphrased content is discounted even when on-topic. Consequently, high scores require evidence that is both question-relevant and genuinely informative relative to the dialogue history. Combined with the global properties, the estimator rewards sustained semantic progress, favoring evidence that continues to reduce uncertainty beyond what the dialogue has already established.

\section{Experiments}
\subsection{Main Results}
We evaluate Gaussian information gain as a signal for semantic progress in information-seeking dialogue. Our experiments first ask whether the score aligns with preference judgments and whether it offers operational speed advantages over LLM-as-a-judge evaluation. We then study robustness across embedding backends and controlled cases where semantic relevance and factual correctness diverge in \Cref{sec:ablation}. Additional experimental details are provided in Appendices~\ref{app:exp} and~\ref{app:syn}.

\begin{table*}[t]
\centering
\setlength{\tabcolsep}{4pt}
\caption{
Agreement and runtime comparison on MT-Bench, Chatbot Arena, and UltraFeedback.
Agr. denotes agreement with the preference label, $\tau$ denotes Kendall's rank correlation, and time denotes end-to-end wall-clock seconds for $N=100$ dialogue pairs averaged over $R=5$ runs. LLM-as-a-judge baselines were executed via AWS Bedrock using temperature $0$.
}
\label{tab:main_combined}
\begin{tabular}{lccc|ccc|ccc}
\toprule
\multirow{2}{*}{\textbf{Method}}
& \multicolumn{3}{c|}{\textbf{MT-Bench}}
& \multicolumn{3}{c|}{\textbf{Chatbot Arena}}
& \multicolumn{3}{c}{\textbf{UltraFeedback}} \\
\cmidrule(lr){2-4} \cmidrule(lr){5-7} \cmidrule(lr){8-10}
& Agr. $\uparrow$ & $\tau$ $\uparrow$ & Time $\downarrow$
& Agr. $\uparrow$ & $\tau$ $\uparrow$ & Time $\downarrow$
& Agr. $\uparrow$ & $\tau$ $\uparrow$ & Time $\downarrow$ \\
\midrule

Mistral Large 3 \cite{mistral3}
& 76.50 & 0.54 & 132
& 60.19 & 0.20 & 119
& 70.75 & 0.42 & 153 \\

DeepSeek R1 \cite{deepseekai2025deepseekr1incentivizingreasoningcapability}
& 80.96 & 0.61 & 633
& 66.14 & 0.32 & 480
& 71.43 & 0.43 & 1173 \\

GPT OSS 120b \cite{openai2025gptoss}
& 72.27 & 0.44 & 568
& 65.27 & 0.31 & 328
& 70.75 & 0.42 & 408 \\

Claude Sonnet 3.7 \cite{anthropic2025claude37}
& 76.47 & 0.53 & 260
& 64.74 & 0.29 & 251
& 70.52 & 0.40 & 285 \\

Claude Sonnet 4 \cite{anthropic2025claude4}
& 81.93 & 0.64 & 220
& 65.58 & 0.31 & 175
& 69.84 & 0.38 & 535 \\

Claude Sonnet 4.5 \cite{anthropic2025claude45}
& 76.05 & 0.52 & 342
& \textbf{66.42} & \textbf{0.33} & 286
& 69.16 & 0.38 & 623 \\

\midrule
\textbf{Ours (\GaussianIG)}
& \textbf{84.03} & \textbf{0.68} & \textbf{86}
& 65.80 & 0.32 & \textbf{44}
& \textbf{71.64} & \textbf{0.43} & \textbf{104} \\
\bottomrule
\end{tabular}

\end{table*}

\noindent\textbf{Setup.} We evaluate on MT-Bench, Chatbot Arena \cite{zheng2023judging}, and UltraFeedback \cite{10.5555/3692070.3692454}. MT-Bench and Chatbot Arena provide paired dialogue comparisons with human preference votes. UltraFeedback provides scored responses across several quality dimensions. Following prior work, we evaluate only examples with non-tied preference labels. We do not filter by category, which makes the evaluation conservative with respect to our stated scope: \GaussianIG measures semantic progress rather than holistic quality, and is expected to be most meaningful for information-seeking interactions.

For each dialogue $D_{1:T}$, we compute the dialogue-level score $\widehat{\mathcal P}_q(D_{1:T})
=
\sum_{t=1}^T \widehat{\mathrm{IG}}_t$. 
Given a pair $(D^A_{1:T},D^B_{1:T})$, we predict the preferred dialogue by computing $\arg\max_{X \in \{A,B\}} \widehat{\mathcal P}_q(D^X_{1:T})$.
We report agreement with the majority preference label and Kendall's $\tau$. Unless otherwise stated, we use \texttt{Qwen3-Embedding-0.6B}; \Cref{sec:ablation} studies other embedding backends and model sizes.

We compare against LLM-as-a-judge baselines, which recent surveys identify as among the strongest automatic dialogue evaluators \cite{li-etal-2025-generation, 10.1609/aaai.v38i17.29923}. These baselines are holistic evaluators, while our metric targets a narrower quantity: agent-induced semantic progress through the accumulation of relevant, non-redundant information. Runtime is measured on a fixed $N=100$ dialogue-pair subset sampled from the same set with fixed ordering and averaged over $R=5$ runs. Our method performs local embedding passes and closed-form Gaussian updates, while judge baselines call hosted inference APIs, so runtimes reflect operational deployment overhead such as network latency and service execution time.

\noindent\textbf{Results.} \Cref{tab:main_combined} reports agreement and runtime on three preference benchmarks.  Since the LLM-as-a-judge baselines evaluate broad dialogue quality, while \GaussianIG measures only agent-induced semantic progress, the comparison is not intended as a holistic evaluator leaderboard. Instead, it tests whether semantic progress explains a substantial fraction of human preference judgments, including in mixed settings where creativity, tone, concision, safety, or style may also drive preferences. On MT-Bench, \GaussianIG obtains $84.03\% $ agreement, comparable to or higher than the evaluated LLM-judge baselines; on UltraFeedback, it also achieves strong agreement, while on Chatbot Arena it remains competitive. These results support \GaussianIG as a complementary metric for relevance-weighted, non-redundant information gain. The dimension-level UltraFeedback analysis further shows weaker alignment with truthfulness and honesty than with helpfulness, reinforcing that factuality and honesty remain outside the scope of the metric.

The runtime results show the main operational advantage. \GaussianIG is substantially faster than hosted LLM judges across benchmarks: $86$s versus $132$--$633$s on MT-Bench, $44$s versus $119$--$480$s on Chatbot Arena, and $104$s versus $153$--$1173$s on UltraFeedback for $N=100$ dialogue pairs. This efficiency comes from replacing autoregressive judge inference with embedding computation and closed-form rank-one covariance updates.

\subsection{Ablations and Robustness}
\label{sec:ablation}

\textbf{Embedding Model.}
\Cref{tab:embedding_ablation} studies sensitivity to the embedding backend. The evaluated models span over three orders of magnitude in parameter count, from a $\sim$2M-parameter CPU model to a 4B-parameter embedding model.

Performance is stable across a wide range of embedding models. On MT-Bench, even the $\sim$2M-parameter CPU model achieves $80.25\%$ agreement, while several small CPU models approach or exceed the default \texttt{Qwen3-Embedding-0.6B} configuration. Larger models provide only modest improvements at substantially higher computational cost. This suggests that the uncertainty-reduction signal saturates once the embedding space captures the dominant semantic directions, after which additional model capacity yields limited benefit for the Gaussian update. Appendix~\ref{app:large_models} discusses this saturation effect in more detail.

\begin{table*}[t]
\centering
\caption{
Embedding model ablation across benchmarks of our method \GaussianIG. Here, Agr.\ denotes pairwise agreement. Device indicates whether embeddings were computed on CPU or GPU/MPS. Runtimes are end-to-end wall-clock seconds for $N=100$ dialogue pairs, averaged over $R=5$ runs.
}
\label{tab:embedding_ablation}
\begin{tabular}{lc|cc|cc|cc}
\toprule
\textbf{Model} & \textbf{Device}
& \multicolumn{2}{c|}{\textbf{MT-Bench}}
& \multicolumn{2}{c|}{\textbf{Chatbot Arena}}
& \multicolumn{2}{c}{\textbf{UltraFeedback}} \\

&
& Agr. $\uparrow$ & Time $\downarrow$
& Agr. $\uparrow$ & Time $\downarrow$
& Agr. $\uparrow$ & Time $\downarrow$ \\

\midrule

Potion-2M \cite{minishlab2024model2vec} & CPU
& 80.25 & 7.1
& 66.05 & 0.4
& 71.20 & 0.8 \\

MiniLM-23M \cite{wang2020minilmdeepselfattentiondistillation} & CPU
& 83.61 & 17.9
& 66.27 & 6.77
& 68.93 & 21.5 \\

Arctic-32M \cite{merrick2024arcticembedscalableefficientaccurate} & CPU
& 85.29 & 29.3
& 65.73 & 13.4
& 69.84 & 37.1 \\

\midrule

ModernBERT-0.1B \cite{nussbaum2024nomic} & GPU
& 83.19 & 28.5
& 65.75 & 15.5
& 68.03 & 57.0 \\

MicroLlama-0.3B \cite{wang2024microllama} & GPU
& 84.45 & 47.4
& 66.17 & 23.7
& 67.80 & 73.2 \\

Qwen3-0.6B \cite{qwen3embedding} & GPU
& 84.03 & 86.4
& 65.80 & 44.1
& 71.64 & 104.4 \\

Qwen3-4B \cite{qwen3embedding} & GPU
& 84.83 & 396.2
& 66.13 & 246.3
& 70.07 & 642.0 \\

\bottomrule
\end{tabular}

\end{table*}

\textbf{Cosine-similarity Only.}
To isolate the contribution of the information-gain mechanism, we compare against a cosine-only ablation that sums per-turn cosine similarities between responses and the dialogue context, removing uncertainty accumulation and diminishing returns. On clear-majority MT-Bench pairs, this baseline achieves only $44.5\%$ agreement, below both a majority baseline $(51.7\%)$ and random guessing $(50.1\%\pm3.2)$, and far below \GaussianIG $(84.0\%)$. This indicates that the gains are not explained by embedding similarity alone: the history-dependent covariance update and redundancy discounting are essential.

\textbf{Length and Padding Robustness.}
To test whether $\widehat{\mathcal P}$ rewards length rather than semantic progress, we pad UltraFeedback responses with irrelevant, redundant, or novel relevant content. At $r=2.0$, irrelevant and redundant padding have small effects ($0.032\pm0.010$ and $0.111\pm0.016$), while novel relevant padding substantially increases the score ($5.697\pm0.408$). This suggests $\widehat{\mathcal P}_q$ is sensitive to question-relevant evidence rather than length alone. See Appendix~\ref{app:padding} for details.

\textbf{Hallucinations and Relevance Parameters.}
We evaluate whether the relevance weighting is sensitive to factual perturbations that preserve surface form and semantic uncertainty. We vary the relevance cutoff $\eta$ and exponent $\beta$.  With mild relevance weighting $\beta=1,\eta=0.05$ the metric already prefers the non-corrupted response above chance. Increasing $\beta$ generally improves sensitivity to the factual perturbations when the cutoff $\eta$ is not too aggressive. Several settings with $\beta \in \{3, 4, 5, 6\}$ achieve win rates between $0.85$ and $0.93$. However, performance drops sharply for large $\eta$, especially at high $\beta$, because the filter removes too much evidence. We provide more details in Appendix~\ref{app:synth}.

\section{Conclusion}
This work develops an information-theoretic approach to multi-turn dialogue evaluation, framing semantic progress in information-seeking interactions as question-conditioned uncertainty reduction. Our Gaussian embedding-space estimator is reference-free, deterministic, and efficient, while preserving useful structural properties such as compositionality across turns and diminishing returns for redundant evidence. Across MT-Bench, Chatbot Arena, and UltraFeedback, it achieves meaningful agreement with human preferences while reducing the cost of evaluation relative to LLM-as-a-judge baselines. These results suggest that embedding models can support richer dialogue-level evaluation beyond static similarity scoring. When combined with relevance weighting, uncertainty updates, and redundancy discounting, even lightweight embeddings can yield useful dialogue-level evaluation signals. This makes the approach well suited for scalable model comparison, regression testing, and deployment monitoring. At the same time, semantic progress is only one dimension of dialogue quality; the metric does not directly assess factuality, safety, coherence, style, or user satisfaction, and depends on the chosen embedding space and evidence extraction procedure. Future work should combine this signal with complementary evaluators and study its applicability beyond information-seeking dialogue.

\bibliographystyle{unsrtnat}
\bibliography{custom}

\appendix

\clearpage
\appendix
\crefalias{section}{appendix} %
\noindent\rule{\textwidth}{1pt}
\begin{center}
\vspace{7pt}
{\Large  Appendix}
\end{center}
\noindent\rule{\textwidth}{1pt}

\section{Queries as One-dimensional Measurements}
\label{app:queries1d}
We model an embedding direction, such as a query or evidence embedding, as a one-dimensional (rank-one) measurement of the latent semantic state $\mathbf{S} \in \mathbb{R}^d$. A text span $x \in \mathcal A^*$ induces a direction defined by a vector
$\mathbf{z}(x) \in \mathbb{R}^d$ in representation space. The corresponding response is a noisy linear measurement: 
\begin{align}
    y_{t} = \mathbf{z}(x)^\top \mathbf{S} + \varepsilon \quad \varepsilon \sim \mathcal{N}(0, \sigma^2)
    \label{eq:query_as_1d_m}
\end{align}
For intuition, the embedding direction $\mathbf{z}(x)$ specifies which direction of the latent semantic space $\mathbf{S}$ is probed. The scalar $y$ reveals the component of $\mathbf{S}$ along that direction. This measurement admits an equivalent $1$-dimensional operator view:
\begin{align}
    P_x \triangleq \mathbf{z}(x)\mathbf{z}(x)^\top
        \label{eq:query_as_1d_m_2}
\end{align}
so that 
\begin{align}
    P_x\mathbf{S} = \mathbf{z}(x)\bigg(\mathbf{z}(x)^\top\mathbf{S}\bigg)
\end{align}
Thus, although $P_x\mathbf{S}$ is a vector, it is determined solely based on the inner product $\mathbf{z}(x)^\top\mathbf{S}$, which is the quantity in \Cref{eq:query_as_1d_m}. In this sense, each question is a one-dimensional observation ``projection like" measurement of $\mathbf{S}$.

If one wishes $P_x$ to be an idempotent projection operator (i.e., $P^2_x = P_x$) then $\mathbf{z}(x)$ must be normalized and the corresponding projector is:
\begin{align}
    \bm{\Pi}_x \triangleq \frac{ \mathbf{z}(x)\mathbf{z}(x)^\top}{\|\mathbf{z}(x)\|^2}
\end{align}
In our setting however, we intentionally keep $ \mathbf{z}(x)$ unnormalized (to preserve scale relevant for the bayesian update). In that case, we view \Cref{eq:query_as_1d_m_2} as a scaled projector:
\begin{align}
P_x \triangleq \alpha \bm{\Pi}_x \quad \alpha \triangleq \|\mathbf{z}(x)\|^2
\end{align}

\section{Relation to Bayesian Linear Regression}
\label{appendix:blr}

Let \(y_i\in\mathbb{R}\) be the output of a linear function of an input
\(\mathbf{s}_i\in\mathbb{R}^d\):
\begin{align}
    y_i = \mathbf{w}^\top \mathbf{s}_i + \varepsilon_i .
\end{align}
Assume heteroscedastic Gaussian noise
\(\varepsilon_i\sim\mathcal{N}(0,\sigma_i^2)\) and a Gaussian prior
\(\mathbf{w}\sim\mathcal{N}(\mathbf{0},\sigma_p^2\mathbf{I})\).
Let \(\mathbf{S}\in\mathbb{R}^{n\times d}\) be the design matrix whose rows are
\(\mathbf{s}_i^\top\), and let \(\mathbf{y}\in\mathbb{R}^n\) collect the outputs \(y_i\).
Using Bayes' rule and independence of the observations, the log posterior is, up to an additive
constant,
\begin{align}
\log p(\mathbf{w}\mid \mathbf{S},\mathbf{y})
&=
-\frac{1}{2}
\left[
\sigma_p^{-2}\|\mathbf{w}\|_2^2
+
\sum_{i=1}^{n}\sigma_i^{-2}
(y_i-\mathbf{w}^\top\mathbf{s}_i)^2
\right]
+c .
\end{align}
Let
\[
\mathbf{R}=\mathrm{diag}(\sigma_1^2,\ldots,\sigma_n^2).
\]
Then the same expression can be written as
\begin{align}
\log p(\mathbf{w}\mid \mathbf{S},\mathbf{y})
&=
-\frac{1}{2}
\left[
\sigma_p^{-2}\mathbf{w}^\top\mathbf{w}
+
(\mathbf{y}-\mathbf{S}\mathbf{w})^\top
\mathbf{R}^{-1}
(\mathbf{y}-\mathbf{S}\mathbf{w})
\right]
+c \\
&=
-\frac{1}{2}
\left[
\mathbf{w}^\top
(\mathbf{S}^\top\mathbf{R}^{-1}\mathbf{S}
+\sigma_p^{-2}\mathbf{I})
\mathbf{w}
-
2\mathbf{y}^\top\mathbf{R}^{-1}\mathbf{S}\mathbf{w}
\right]
+c .
\end{align}
Therefore the posterior covariance is
\begin{align}
    \mathbf{\Sigma}
    =
    \left(
    \mathbf{S}^\top\mathbf{R}^{-1}\mathbf{S}
    +
    \sigma_p^{-2}\mathbf{I}
    \right)^{-1},
\end{align}
and hence the posterior precision is
\begin{align}
    \mathbf{\Sigma}^{-1}
    =
    \mathbf{S}^\top\mathbf{R}^{-1}\mathbf{S}
    +
    \sigma_p^{-2}\mathbf{I}
    =
    \sigma_p^{-2}\mathbf{I}
    +
    \sum_{i=1}^{n}\sigma_i^{-2}\mathbf{s}_i\mathbf{s}_i^\top .
\end{align}

In our formulation, the input vector \(\mathbf{s}_i\) corresponds to an evidence embedding
\(\mathbf{z}_{t,i}\), and the prior precision \(\sigma_p^{-2}\mathbf{I}\) corresponds to
\(\mathbf{J}_0=\mathbf{\Sigma}_0^{-1}\). Setting
\[
\sigma_i^2 = \frac{\sigma^2}{w_{t,i}}
\quad\text{for } w_{t,i}>0
\]
gives
\[
\mathbf{\Sigma}^{-1}
=
\mathbf{J}_0
+
\sum_i
\frac{w_{t,i}}{\sigma^2}
\mathbf{z}_{t,i}\mathbf{z}_{t,i}^\top ,
\]
which is the weighted precision update in Equation~(6). When \(w_{t,i}=0\), the corresponding
observation has infinite effective noise and contributes no update.

\begin{table}[t]
\centering
\caption{Correspondence between Bayesian linear regression and our weighted Gaussian information-gain formulation.}
\label{tab:blr_correspondence}
\begin{tabular}{ll}
\toprule
\textbf{Bayesian Linear Regression} & \textbf{Our Formulation} \\
\midrule
Latent parameter \(\mathbf{w}\) & Latent semantic state \(\mathbf{S}\) \\
Input vector \(\mathbf{s}_i\) & Evidence embedding \(\mathbf{z}_{t,i}\) \\
Scalar observation \(y_i\) & Unobserved measurement value \(y_{t,i}\) \\
Noise variance \(\sigma_i^2\) & Effective evidence noise \(\sigma^2/w_{t,i}\) \\
Prior covariance \(\sigma_p^2 \mathbf{I}\) & Initial covariance \(\mathbf{\Sigma}_0\) \\
Design matrix \(\mathbf{S}\) & Evidence embedding stack \\
Posterior covariance \(\mathbf{\Sigma}\) & Posterior semantic uncertainty \\
Precision update \(\sigma_i^{-2}\mathbf{s}_i\mathbf{s}_i^\top\) &
Rank-one update \((w_{t,i}/\sigma^2)\mathbf{z}_{t,i}\mathbf{z}_{t,i}^\top\) \\
\bottomrule
\end{tabular}
\end{table}

\section{Proof of \Cref{thm:gaussian-maxent-upper-bound}}
\label{app:proof_max_ent}
\begin{theorem*}
    For a fixed turn $t \in [T]$, let $\mathbf{Z}$ be the random evidence vector corresponding to $\mathbf{z} = \langle \mathbf{z}_1, \ldots,  \mathbf{z}_m \rangle \in \mathbb{R}^{md}$ with evidence components $\mathbf{Z}_i \in \mathbb{R}^d$. If $h(\mathbf{Z} \mid \mathbf{S})\geq c$, where $c$ is independent of the dialogue strategy, and assume the evidence have common empirical covariance $\mathbf{\Sigma}_\mathbf{Z} \succ \mathbf{0}$, then:
    \begin{align}
        I(\mathbf{Z};\mathbf{S}) \leq m \big [\frac{d}{2}\log(2\pi e) + \frac{1}{2}\log\det \mathbf{\Sigma}_\mathbf{Z} \big ] -c 
    \end{align}
\end{theorem*}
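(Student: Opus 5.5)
The plan is to write the mutual information as a difference of differential entropies, use subadditivity to split the joint entropy over the $m$ components, and apply the Gaussian maximum-entropy bound to each component.

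\textbf{Step 1: entropy decomposition.} We start from
\[
I(\mathbf Z;\mathbf S)=h(\mathbf Z)-h(\mathbf Z\mid \mathbf S).
\]
The hypothesis $h(\mathbf Z\mid\mathbf S)\ge c$ immediately gives $I(\mathbf Z;\mathbf S)\le h(\mathbf Z)-c$. So everything reduces to upper bounding the unconditional entropy $h(\mathbf Z)$ of the stacked vector $\mathbf Z=(\mathbf Z_1,\ldots,\mathbf Z_m)\in\mathbb R^{md}$.

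\textbf{Step 2: subadditivity.} We apply the chain rule for differential entropy, then drop the conditioning (conditioning reduces entropy). This gives
\[
h(\mathbf Z)=\sum_{i=1}^m h(\mathbf Z_i\mid \mathbf Z_{1:i-1})\le \sum_{i=1}^m h(\mathbf Z_i).
\]
This step avoids reasoning about the full $md\times md$ covariance, whose off-diagonal blocks are not controlled by the hypothesis.

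\textbf{Step 3: maximum entropy per component.} For each $i$, $\mathbf Z_i\in\mathbb R^d$ has covariance $\mathbf\Sigma_{\mathbf Z}\succ\mathbf 0$, which is the common covariance assumed in the statement. The standard maximum-entropy fact is that, among all densities on $\mathbb R^d$ with a given covariance, the Gaussian maximizes differential entropy. Hence
\[
h(\mathbf Z_i)\le \tfrac d2\log(2\pi e)+\tfrac12\log\det\mathbf\Sigma_{\mathbf Z}.
\]
I would justify this in one line with the Gibbs/KL argument. Let $g$ be the Gaussian density with the same mean and covariance as $\mathbf Z_i$. Then $0\le D(p\|g)=-h(p)-\mathbb E_p[\log g]$. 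Moreover, $\mathbb E_p[\log g]=\mathbb E_g[\log g]=-h(g)$, because $\log g$ is a quadratic in $\mathbf z$ and so its expectation depends only on the first two moments. Summing the bound over $i$ gives $m$ copies of the Gaussian entropy term, and combining with Step 1 yields the stated inequality.

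\textbf{Main obstacle.} The delicate point is interpretive: the statement speaks of an ``empirical'' covariance, while the entropy bound needs the true covariance of the random vector $\mathbf Z_i$. I would make explicit that we read $\mathbf\Sigma_{\mathbf Z}$ as the covariance of each random component $\mathbf Z_i$ (equivalently, each has covariance at most $\mathbf\Sigma_{\mathbf Z}$ in PSD order, using monotonicity of $\log\det$). I would also assume that $\mathbf Z$ admits a density so that $h(\mathbf Z)$ is finite. If the components instead had differing covariances $\mathbf\Sigma_i$, the same argument gives the bound with $\sum_i\frac12\log\det\mathbf\Sigma_i$, and the stated form is the special case $\mathbf\Sigma_i=\mathbf\Sigma_{\mathbf Z}$.
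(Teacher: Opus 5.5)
Your proof is correct. Under your reading of $\mathbf\Sigma_{\mathbf Z}$ as the covariance of each component, it establishes the inequality exactly as stated, but by a genuinely different route from the paper's. The paper discretizes $\mathbb R^d$ into cubic bins of side $\delta$ and describes the realized sequence $\mathbf z$ by a two-part code: first its type, costing $|\mathcal A|\log(m+1)$ bits, then its index within the type class, costing at most $m\,H(\widehat P_{\mathbf z})$ bits. The source coding theorem then gives $H(\mathbf Z)\le m\,\mathbb E[H(\widehat P_{\mathbf Z})]+|\mathcal A|\log(m+1)$. Only after that does it apply the Gaussian maximum-entropy bound, to the bin-smoothed empirical distribution with second-moment matrix $\widehat{\mathbf\Sigma}_{\mathbf z}=\frac1m\sum_i\mathbf z_i\mathbf z_i^\top$; the $\delta$-dependent terms cancel against the discretized $H(\mathbf Z\mid\mathbf S)$. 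Your subadditivity step plus per-marginal maximum entropy replaces all of this, stays in the continuous setting, and leaves no residual. The paper's argument, by contrast, delivers the bound only up to the additive overhead $|\mathcal A|\log(m+1)$ (with $|\mathcal A|$ the number of bins) and a discretization approximation, which is why it closes with a large-$m$ ``surrogate'' remark. What the paper's route buys is a per-realization bound in terms of an observable statistic, the empirical distribution of the evidence sequence, which matches the log-determinant the estimator actually computes. Moreover, $m\,\mathbb E[H(\widehat P_{\mathbf Z})]$ can be far smaller than $\sum_i H(\mathbf Z_i)$ when evidence items repeat, so that intermediate bound retains redundancy that subadditivity discards. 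You can resolve your interpretive worry inside your own argument. Since $\operatorname{Cov}(\mathbf Z_i)\preceq\mathbb E[\mathbf Z_i\mathbf Z_i^\top]$ and $\log\det$ is concave, $\sum_i\log\det\operatorname{Cov}(\mathbf Z_i)\le m\log\det\mathbb E[\widehat{\mathbf\Sigma}_{\mathbf Z}]$. Hence the stated bound holds with $\mathbf\Sigma_{\mathbf Z}$ read as the expected empirical second-moment matrix, without requiring the components to share a covariance. (If the empirical second moment were literally identical for every realization, $\mathbf Z$ would be supported on a Lebesgue-null subset of $\mathbb R^{md}$ and have no density; this is presumably what the paper's discretization is meant to sidestep.)
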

\begin{proof}
    Let $t \in [T]$ be a fixed turn and $\bz= \langle \bz_1, \ldots, \bz_m \rangle \in \mathbb{R}^{md}$ be one realization of a random vector $\bZ$. We assume that a given strategy for generating a dialogue defines a conditional distribution $P(\bZ \mid \mathbf{S})$. Together with a distribution $P(\mathbf{S})$ of semantic states, we thus obtain a joint distribution $P(\mathbf{Z}, \mathbf{S})$. The information $\bZ$ reveals on average about $\mathbf{S}$ is given by the mutual information \cite{Cover2006}:
    \begin{align}
        I(\bZ;\mathbf S)=h(\bZ) - h(\bZ \mid \mathbf S),
    \end{align}
    with the differential entropy $h$ defined by $h(X)\triangleq- \int p(x) \log p(x) dx$, where $p(x)$ denotes a density some density in $\mathbb{R}^n$. 
    For (discrete) Shannon entropy we have $\entropy(\bZ\mid \mathbf{S}) \geq 0$ so $ I(\bZ;\mathbf S) \leq \entropy(\bZ)$. In the continuous case, entropy may be negative, so instead assume $h(\bZ \mid \mathbf{S}) \geq c$ with $c$ independent of the dialogue strategy. Then, $ I(\bZ;\mathbf S) \leq h(\mathbf Z)-c$. Now, discretize the evidence-embedding space into a finite alphabet $\mathcal{A}$ labeling 
    cubic bins of size $\delta^d$
    such that
    we can safely approximate  
    $h(\bZ) - h(\bZ \mid \mathbf{S})$ with the corresponding discrete mutual information (which is at least guaranteed if the bins are small relative to the uncertainty of $P(\bZ \mid \mathbf{S}$). The proof uses a source-coding perspective. Since entropy characterizes the minimum achievable average code length, we can upper-bound the entropy of the evidence sequence by constructing an explicit code for it.
    
    Let $\widehat{P}_\mathbf{z}$ be the empirical distribution of the observed sequence $\mathbf{z}$, to reconstruct the vector $\langle \mathbf{z}_1, \ldots, \mathbf{z}_m \rangle$ we can consider a two-part code.

    First, encode the type $\widehat{P}_\mathbf{z}$. Since a type is determined by $|\mathcal A|$ integer counts summing to $m$, the number of possible types is at most $(m+1)^{|\mathcal A|}$ so  encoding the type costs $|\mathcal A|\log (m+1)$ bits.

    Second, given the type $\widehat{P}_\mathbf{z}$, encode which ordered sequence within the type class occurred. The type class has cardinality:
    \begin{align}
|T(\widehat{P}_\mathbf{z})| = \frac{m!}{\prod_{a\in \mathcal A} (m \cdot \widehat{P}_\mathbf{z}(a))!}
    \end{align}
    which is bounded by $|T(\widehat{P}_\mathbf{z})| \leq 2^{m\entropy(\widehat{P}_\mathbf{z})}$ following Theorem 3.1.2 in \cite{Cover2006} with $\epsilon = 0$. Therefore, identifying the ordered realization within the type class requires at most $m\cdot \entropy(\widehat{P}_\mathbf{z})$ bits. Hence, $\bz$ can be described in $m\cdot \entropy(\widehat{P}_\mathbf{z}) + |\mathcal{A}|\log( m+1)$ many bits. Due to Shannon's source coding theorem (Theorem 5.4.1 in \cite{Cover2006}), the entropy cannot be larger than the number of bits needed on the average, we thus have:
    \begin{align}
    \entropy(\bZ)=\entropy(\bZ_1, \ldots, \bZ_m) \leq m \cdot \mathbb{E}[\entropy(\widehat{P}_\bz)] +|\mathcal A|\log (m+1)
    \end{align}
    Next, suppose we retain only the empirical second moment matrix
    \begin{align}
        \widehat{\bm \Sigma}_\bz \triangleq \frac{1}{m} \sum_{i=1}^m \bz_i\bz_i^\top
    \end{align}
We now interpret 
the discrete distribution  as density $\widehat{p}_\bz$ in $\mathbb{R}^d$
that is constant within one bin. Its differential entropy $h(\widehat{p}_\bz)$
cannot be larger 
than the entropy of a Gaussian with zero mean and covariance $\widehat{\bm{\Sigma}}_\bz$, see \cite{Cover2006}, Theorem 8.6.5. Thus,
$h(\widehat{P}_\bz) \leq \frac{d}{2} \log (2\pi e) + \frac{1}{2} \log \det \widehat{\bm{\Sigma}}_\bz$. 
The discrete Shannon entropy $H(\widehat{P}_\bz)$ is thus bounded from above by 
$\frac{d}{2} \log (2\pi e) + \frac{1}{2} \log \det \widehat{\bm{\Sigma}}_\bz + d \log \delta$, where the 
extra summand comes from 
multiplying the density
with $\delta^d$ to get the probability mass function. 

To obtain a lower bound on the discretized version of $P(\bZ \mid \mathbf{S})$ we observe
that averaging the density over bins can only increase entropy and 
thus $H(\bZ \mid \mathbf{S})\geq c + dm \cdot \log \delta$ (with a similar extra term from $dm$-dimensional cubes). 
Obviously, these extra terms cancel for the mutual information. 

Therefore, for a fixed turn, the expression $m\big[ \frac{d}{2}\log(2\pi e) + \frac{1}{2}\log \det \widehat{\bm{\Sigma}}_\bz \big]$ is the empirical Gaussian maximum-entropy surrogate for the entropy of the evidence vector, up to the type-encoding term $|\mathcal A|\log (m+1)$, and hence up to the conditional uncertainty term, and this lower-order coding cost, it gives an upper-bound surrogate for the information the turn's evidence can reveal about the semantic state. 
\end{proof}
\textbf{Remark:} When $|\mathcal A|$ is fixed, the normalized error term $|\mathcal A|\log (m+1)/m$ vanishes as $m\to \infty$. Thus, the Gaussian log-determinant quantity is a large-$m$ upper-bound surrogate.

\section{Proof of \Cref{thm:monotonicity}}
\label{appendix:proof_monotonicity}
\begin{theorem*}[Monotonicity and Telescoping]
Assume $\mathbf{\Sigma}_0 \succ \mathbf{0}$ and weights $\alpha_{t,i}\ge 0$.
Under the precision update in \Cref{eq:precision_update}, we have
$\mathbf{\Sigma}_t \preceq \mathbf{\Sigma}_{t-1}$ (in PSD order) and hence $\IG_t\ge 0$ for all $t$.
Moreover, the total gain telescopes:
\begin{align}
\sum_{t=1}^T \IG_t
=
\frac{1}{2}\log \frac{\det(\mathbf{\Sigma}_0)}{\det(\mathbf{\Sigma}_T)} ,
\end{align}
so it depends only on the initial and final posterior covariances.
\end{theorem*}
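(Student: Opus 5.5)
The plan is to work entirely in precision form and then transfer the conclusions to covariances using the order-reversing property of matrix inversion on positive definite matrices.

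\textbf{Step 1 (positive definiteness of every $\mathbf J_t$).} Write $\mathbf J_t=\mathbf J_{t-1}+\mathbf U_t$, where
\[
\mathbf U_t\triangleq\sum_{i=1}^{m_t}\alpha_{t,i}\mathbf z_{t,i}\mathbf z_{t,i}^\top .
\]
For any $\mathbf v$ we have $\mathbf v^\top\mathbf U_t\mathbf v=\sum_i\alpha_{t,i}(\mathbf z_{t,i}^\top\mathbf v)^2\ge 0$, because $\alpha_{t,i}\ge 0$. Hence $\mathbf U_t\succeq\mathbf 0$. Since $\mathbf J_0=\mathbf\Sigma_0^{-1}\succ\mathbf 0$, induction on $t$ gives $\mathbf J_t\succeq\mathbf J_{t-1}\succ\mathbf 0$. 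Consequently every $\mathbf\Sigma_t=\mathbf J_t^{-1}$ is well defined and positive definite, and all log-determinants in \Cref{eq:gauss_ig} are finite.

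\textbf{Step 2 (Loewner order).} Next I show $\mathbf\Sigma_t\preceq\mathbf\Sigma_{t-1}$, i.e.\ that inversion reverses the Loewner order on positive definite matrices. Set $\mathbf A=\mathbf J_{t-1}$ and $\mathbf B=\mathbf J_t$, so $\mathbf B\succeq\mathbf A\succ\mathbf 0$. Congruence by $\mathbf A^{-1/2}$ gives $\mathbf M\triangleq\mathbf A^{-1/2}\mathbf B\mathbf A^{-1/2}\succeq\mathbf I$. Thus all eigenvalues of the symmetric matrix $\mathbf M$ are at least $1$, so those of $\mathbf M^{-1}$ are at most $1$, i.e.\ $\mathbf M^{-1}\preceq\mathbf I$. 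Since $\mathbf M^{-1}=\mathbf A^{1/2}\mathbf B^{-1}\mathbf A^{1/2}$, conjugating back by $\mathbf A^{-1/2}$ yields $\mathbf B^{-1}\preceq\mathbf A^{-1}$.

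\textbf{Step 3 (nonnegativity of $\IG_t$).} Write
\[
\IG_t=\tfrac12\log\frac{\det\mathbf\Sigma_{t-1}}{\det\mathbf\Sigma_t}=\tfrac12\log\frac{\det\mathbf J_t}{\det\mathbf J_{t-1}}=\tfrac12\log\det\mathbf M .
\]
All eigenvalues of $\mathbf M$ are at least $1$, so $\det\mathbf M\ge 1$ and $\IG_t\ge 0$. This also gives an alternative route that avoids Step 2: by the matrix determinant lemma, each rank-one addition multiplies the determinant by $1+\alpha\mathbf z^\top\mathbf J^{-1}\mathbf z\ge 1$, as in \Cref{eq:delta_ig}.

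\textbf{Step 4 (telescoping).} Summing $\IG_t=\tfrac12\left[\log\det\mathbf\Sigma_{t-1}-\log\det\mathbf\Sigma_t\right]$ over $t=1,\dots,T$, the intermediate terms cancel, leaving
\[
\sum_{t=1}^T\IG_t=\tfrac12\log\frac{\det\mathbf\Sigma_0}{\det\mathbf\Sigma_T}.
\]

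The only nontrivial step is the order reversal under inversion in Step 2. The congruence argument handles it cleanly, provided Step 1 has first established strict positive definiteness, which is what makes $\mathbf A^{-1/2}$ and all the determinants well defined.
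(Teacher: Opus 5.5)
Your proof is correct and follows the same route as the paper. You show the increment is PSD, so $\mathbf J_t\succeq\mathbf J_{t-1}\succ\mathbf 0$; you invert to get $\mathbf\Sigma_t\preceq\mathbf\Sigma_{t-1}$, use determinant monotonicity to get $\IG_t\ge 0$, and then telescope — the only difference is that you prove the order-reversal and determinant-monotonicity facts via the congruence $\mathbf A^{-1/2}\mathbf B\mathbf A^{-1/2}\succeq\mathbf I$, whereas the paper cites them as standard facts.
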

\begin{proof}
Assume $\mathbf{\Sigma}_0 \succ \mathbf{0}$ with weights $w_{t, i} \geq 0$, recall 
\begin{align}
    \mathbf{J}_t = \mathbf{J}_{t-1} + \sum_{i=1}^{m_t} \frac{w_{t,i}}{\sigma^2}\,\mathbf{z}_{t,i}\mathbf{z}_{t,i}^\top .
\end{align}
which means $\mathbf{\Sigma}_t\preceq \mathbf{\Sigma}_{t-1}$. Let 
\begin{align}
        \mathbf{A}_t \triangleq\sum_{i=1}^{m_t} \frac{w_{t,i}}{\sigma^2}\,\mathbf{z}_{t,i}\mathbf{z}_{t,i}^\top
\end{align}
Since $w_{t, i} \geq 0$, and $\mathbf{zz^\top}\succeq \mathbf{0}$, each term is PSD, and therefore $\mathbf{A}_t \succeq \mathbf{0}$. The update is 
\begin{align}
    \mathbf{J}_{t} = \mathbf{J}_{t-1} +\mathbf{A}_t
\end{align}
which implies $\mathbf{J}_{t} \succeq \mathbf{J_{t-1}}$. Because $\mathbf{\Sigma}_0 \succ \mathbf{0}$, and $\mathbf{J}_0 \succeq \mathbf{0}$, by induction $\mathbf{J}_t \succ \mathbf{0}$ for all $t$, so $\mathbf{\Sigma}^{-1}_t = \mathbf{J}_t$ is well-defined. Using the fact for any $\mathbf{\Lambda} \succ \mathbf{0}, \mathbf{\Gamma} \succ \mathbf{0}$, and $\mathbf{\Lambda} \succeq \mathbf{\Gamma}$, we have $\mathbf{\Lambda}^{-1} \preceq \mathbf{\Gamma}^{-1}$. Applying to $\mathbf{\Lambda = \mathbf{J}}_t$ and $\mathbf{\Gamma}=\mathbf{J}_{t-1}$, we get $\mathbf{J}_{t}^{-1} \preceq \mathbf{J}_{t-1}^{-1}$ meaning $\mathbf{\Sigma}_{t} \preceq \mathbf{\Sigma}_{t-1}$. Thus, posterior covariance is monotone non-increasing in PSD order.

Next, recall
\begin{align}
    \IG_t 
=
 \frac{1}{2}\log \frac{\det(\mathbf{\Sigma}_{t-1})}{\det(\mathbf{\Sigma}_t)} .
\end{align}

Since, $\mathbf{\Sigma}_{t} \preceq \mathbf{\Sigma}_{t-1}$, determinant monotonicity on the positive-definite cone gives $\det(\bm \Sigma_t)\le \det(\bm \Sigma_{t-1}),$
and hence $\IG_t \geq 0$.
Finally, telescoping follows alternating sums
\begin{align}
    \sum_t \IG_t &= \frac{1}{2}\sum_{t}\log\det(\bm \Sigma_{t-1}) - \log\det(\bm \Sigma_t) \\&= \frac{1}{2}\log\det(\bm \Sigma_{0}) - \log\det(\bm \Sigma_T)\\
    &=\frac{1}{2}\log\frac{\det\bm \Sigma_0}{\det\bm \Sigma_T}
\end{align}
This proves both monotonicity and telescoping.
\end{proof}

\section{Proof of \Cref{thm:submodularity}}
\label{appendix:proof_submodularity}
\begin{theorem*}[Diminishing Returns]
Assume $\mathbf J_0\succ 0$. The set function $F$ is monotone submodular. That is, for all
$A\subseteq B\subseteq\mathcal X $and $x\notin B$,
\begin{align*}
    F(A\cup\{x\})-F(A)
\ge
F(B\cup\{x\})-F(B).
\end{align*}
\end{theorem*}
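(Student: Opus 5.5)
The plan is to reduce both monotonicity and submodularity to the closed-form marginal gain from the matrix determinant lemma, as in \Cref{eq:delta_ig}. For any $\mathcal S\subseteq\mathcal X$, $\mathbf J(\mathcal S)\succeq\mathbf J_0\succ 0$ because each added term $\alpha_i\mathbf z_i\mathbf z_i^\top$ is PSD. Hence $\mathbf J(\mathcal S)$ is invertible and $F$ is well defined. For $x\notin\mathcal S$, the determinant lemma gives
\begin{align*}
F(\mathcal S\cup\{x\})-F(\mathcal S)=\log\!\left(1+\alpha_x\,\mathbf z_x^\top\mathbf J(\mathcal S)^{-1}\mathbf z_x\right).
\end{align*}
Monotonicity follows immediately. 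Since $\mathbf J(\mathcal S)^{-1}\succ 0$ and $\alpha_x\ge 0$, the argument of the logarithm is at least $1$, so the marginal gain is nonnegative. This is the same reasoning as in \Cref{thm:monotonicity}.

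For submodularity, take $A\subseteq B$ and $x\notin B$. Then
\begin{align*}
\mathbf J(B)=\mathbf J(A)+\sum_{i\in B\setminus A}\alpha_i\mathbf z_i\mathbf z_i^\top\succeq\mathbf J(A)\succ 0.
\end{align*}
By the operator-monotonicity fact already used in the proof of \Cref{thm:monotonicity} (matrix inversion reverses the Loewner order on the positive-definite cone), $\mathbf J(B)^{-1}\preceq\mathbf J(A)^{-1}$. Evaluating this quadratic-form inequality at $\mathbf z_x$ gives
\begin{align*}
\mathbf z_x^\top\mathbf J(B)^{-1}\mathbf z_x\le\mathbf z_x^\top\mathbf J(A)^{-1}\mathbf z_x.
\end{align*}
Multiplying by $\alpha_x\ge0$, adding $1$, and applying the increasing function $\log$ yields $F(A\cup\{x\})-F(A)\ge F(B\cup\{x\})-F(B)$.

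There is no serious obstacle. The only point needing care is the Loewner-order reversal under inversion. If it is not taken as known, I would prove it by congruence: write $\mathbf\Lambda\succeq\mathbf\Gamma\succ0$ as $\mathbf\Gamma^{-1/2}\mathbf\Lambda\mathbf\Gamma^{-1/2}\succeq\mathbf I$. Its eigenvalues are then at least $1$, so its inverse is $\preceq\mathbf I$, and conjugating back gives $\mathbf\Lambda^{-1}\preceq\mathbf\Gamma^{-1}$. The degenerate case $\alpha_x=0$ gives zero gain on both sides and is trivial.
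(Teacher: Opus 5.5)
Your proposal is correct and follows essentially the same route as the paper: the matrix determinant lemma gives the marginal gain $\log(1+\alpha_x\mathbf z_x^\top\mathbf J(\mathcal S)^{-1}\mathbf z_x)$, and reversing the Loewner order under inversion applied to $\mathbf J(A)\preceq\mathbf J(B)$ gives diminishing returns. The only cosmetic difference is in monotonicity: the paper derives it from $\mathbf J(\mathcal S\cup\{e\})\succeq\mathbf J(\mathcal S)$ and the fact that $\log\det$ is increasing on the positive-definite cone, whereas you read it directly off the determinant-lemma formula.
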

Let
\begin{align}
    \mathbf{J}(\mathcal{S})
\;=\;
\mathbf{J}_0
\;+\;
\sum_{i \in \mathcal{S}}
\alpha_i \, \mathbf{z}_i \mathbf{z}_i^\top, \quad \mathbf{J}_0 \succ0, \alpha_i \geq 0
\end{align}
and 
\begin{equation}
F(\mathcal{S}) \triangleq \log \det \mathbf{J}(\mathcal{S}).
\end{equation}
We first show $F$ is monotone:
\begin{proof}
First, to show monotonicity: for any $\mathcal{S}$ and element $e \not \in \mathcal{S}$:
\begin{align}
    \mathbf{J}(\mathcal{S}\cup \{e\}) = \mathbf{J}(\mathcal{S})+ \alpha_e \, \mathbf{z}_e \mathbf{z}_e^\top \succeq \mathbf{J}(\mathcal{S})
\end{align}
Since $\log \det (\cdot)$ is increasing over the positive-definite cone under PSD increments, $F(\mathcal{S}\cup \{e\}) \geq F(\mathcal{S})$
\end{proof}
Next, we show $F$ is submodular:
\begin{proof}
    Consider the marginal gain of adding an element $e$:
    \begin{align}
        \Delta_e(\mathcal{S}) &\triangleq F(\mathcal{S}\cup \{e\}) - F(\mathcal{S}) \\ &= \log \frac{\det ( \mathbf{J}(\mathcal{S})+ \alpha_e \, \mathbf{z}_e \mathbf{z}_e^\top)}{\det( \mathbf{J}(\mathcal{S}))}
    \end{align}
    using the matrix determinant lemma 
    \begin{align}
        \det ( \mathbf{J}(\mathcal{S})+ \alpha_e \, \mathbf{z}_e \mathbf{z}_e^\top) = \det( \mathbf{J}(\mathcal S))(1+\alpha_e \, \mathbf{z}_e^\top\mathbf{J}(\mathcal S)^{-1}\mathbf{z}_e)
    \end{align}
    Therefore
    \begin{align}
          \Delta_e(\mathcal{S}) = \log(1+\alpha_e\mathbf{z}_e^\top\mathbf{J}(\mathcal{S})^{-1}\mathbf{z}_e)
    \end{align}
    Now to show submodularity, take $\mathcal{A}\subseteq\mathcal{B}$ which gives us $\mathbf{J}(\mathcal{A})\preceq\mathbf{J(\mathcal{B)}}$ from the earlier proof, implying $\mathbf{J}(\mathcal{A})^{-1}\succeq\mathbf{J}(\mathcal{B})^{-1}$. Then, the following must hold:
    \begin{align}
        \alpha_e\mathbf{z}_e^\top\mathbf{J}(\mathcal{A})^{-1}\mathbf{z}_e \geq \alpha_e\mathbf{z}_e^\top\mathbf{J}(\mathcal{B})^{-1}\mathbf{z}_e
    \end{align}
    because $x \mapsto \log (1+\alpha_ex)$ is increasing for $\alpha_e\geq0$. Therefore, $\Delta_e (\mathcal{A}) \geq \Delta_e(\mathcal{B})$ substituting back we get the conditions for submodularity:
    \begin{align}
        F(\mathcal{A} \cup \{e\}) - F(\mathcal{A})
\;\ge\;
F(\mathcal{B} \cup \{e\}) - F(\mathcal{B}).
\end{align}
\end{proof}

\section{Proofs for Irrelevance and Redundancy Lemmas}
We will be using two important facts:
\begin{itemize}
    \item \textbf{Order Reversal under Inversion}
For any $\mathbf{A} \succ \mathbf{0}, \mathbf{B} \succ \mathbf{0}$, if $\mathbf{A} \prec \mathbf{B}$ then $\mathbf{A}^{-1} \succ \mathbf{B}^{-1}$.
\item \textbf{Rayleigh Quotient Bound}
For a symmetric $\mathbf{M}\succeq \mathbf{0}$ we have $\mathbf{z}^\top\mathbf{M}\mathbf{z} \leq \lambda_{\max}(\mathbf{M})\|\mathbf{z}\|_2^2$.
\end{itemize}

\subsection{Proof of \Cref{lemma:soft_irrelev}}
\label{appendix:soft_irrelev}
\begin{lemma*}[Soft Irrelevance]
Assume embeddings are norm bounded $\|\mathbf{z}\| \leq B$, and $\alpha\leq \varepsilon$.
When $\mathbf{z}$ is irrelevant compared to the question $\mathbf{q}$, then information gain for that turn is bounded by $\log(1+\alpha\lambda_{\max}(\mathbf{J}_0^{-1})B^2) = \mathcal{O}(\varepsilon)$.
\end{lemma*}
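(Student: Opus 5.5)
The plan is to work directly from the closed-form marginal contribution in \Cref{eq:delta_ig}, so that the gain of the evidence item is
\begin{align*}
\Delta(\mathbf z;\mathbf J)=\log\!\left(1+\alpha\,\mathbf z^\top\mathbf J^{-1}\mathbf z\right),
\end{align*}
where $\mathbf J$ is the precision at the moment the item is added, i.e.\ some $\mathbf J_{t-1}$ possibly augmented by other evidence of the same turn. The argument has three steps. First, replace $\mathbf J^{-1}$ by the prior covariance $\mathbf J_0^{-1}$. Second, apply the Rayleigh quotient bound together with the norm bound. Third, linearize the logarithm to obtain the $\mathcal O(\varepsilon)$ rate.

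For the first step, \Cref{thm:monotonicity} (or the monotonicity part of \Cref{thm:submodularity}) gives $\mathbf J\succeq\mathbf J_0\succ\mathbf 0$, since every update adds a PSD term. By order reversal under inversion, $\mathbf J^{-1}\preceq\mathbf J_0^{-1}$, and hence $\mathbf z^\top\mathbf J^{-1}\mathbf z\le\mathbf z^\top\mathbf J_0^{-1}\mathbf z$.

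For the second step, the Rayleigh quotient bound and $\|\mathbf z\|\le B$ give $\mathbf z^\top\mathbf J_0^{-1}\mathbf z\le\lambda_{\max}(\mathbf J_0^{-1})B^2$. Because $\alpha\ge0$ and $x\mapsto\log(1+\alpha x)$ is nondecreasing, we obtain
\begin{align*}
\Delta(\mathbf z;\mathbf J)\le\log\!\left(1+\alpha\lambda_{\max}(\mathbf J_0^{-1})B^2\right).
\end{align*}
With $\mathbf J_0=\sigma_0^{-2}\mathbf I$ this equals $\log(1+\alpha\sigma_0^2B^2)$.

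For the third step, use $\log(1+x)\le x$ together with $\alpha\le\varepsilon$. This bounds the gain by $\varepsilon\,\lambda_{\max}(\mathbf J_0^{-1})B^2=\mathcal O(\varepsilon)$, where the constant depends only on the prior and the embedding norm bound. The per-turn gain $\IG_t$ carries an extra factor $\tfrac12$, which only improves the bound.

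The main obstacle is interpretive. ``Information gain for that turn'' may mean a turn containing several irrelevant items rather than a single one. In that case I would add the items one at a time and apply the single-item bound at each intermediate precision, which is legitimate since each intermediate precision still dominates $\mathbf J_0$. Summing the telescoped increments then gives at most $m_t\,\varepsilon\,\lambda_{\max}(\mathbf J_0^{-1})B^2$, which is still $\mathcal O(\varepsilon)$ for a bounded number of evidence units. Alternatively, one can bound $\log\det(\mathbf I+\mathbf J^{-1/2}\mathbf A_t\mathbf J^{-1/2})\le\operatorname{tr}(\mathbf J_0^{-1}\mathbf A_t)$ directly.
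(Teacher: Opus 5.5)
Your proposal is correct and matches the paper's proof step for step: you use $\mathbf J\succeq\mathbf J_0\succ\mathbf 0$ with order reversal to pass to $\mathbf J_0^{-1}$, then the Rayleigh quotient bound with $\|\mathbf z\|\le B$, then monotonicity of the logarithm together with $\alpha\le\varepsilon$. Your remarks on the factor $\tfrac12$ and on turns with several evidence items (handled either by telescoping or by the trace bound) go slightly beyond the paper, whose argument treats only a single rank-one addition.
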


\begin{proof}
    Assume $\|\mathbf{z}\| \leq B$, we have
    \begin{align}
        \Delta(\mathbf{z}; \mathbf{J}) =
            \log\!\left(1 + \alpha\, \mathbf{z}^\top \mathbf{J}^{-1}\mathbf{z}\right).
    \end{align}
    Since $\mathbf{J} \succeq \mathbf{J}_0 \succ \mathbf{0}$, by order reversal under inversion, we know  $\mathbf{J}^{-1} \preceq \mathbf{J}^{-1}_0$ and hence
    \begin{align}
\mathbf{z}^\top\mathbf{J}^{-1}\mathbf{z} \leq \mathbf{z}^\top\mathbf{J}^{-1}_0\mathbf{z}
    \end{align}
    Since $\mathbf{J}_0$  is a positive-definite precision matrix, $\mathbf{J_0^{-1}}$ is a symmetric positive-definite covariance matrix. Applying the Rayleigh quotient bound, we get
    \begin{align}
        \mathbf{z}^\top\mathbf{J}^{-1}_0\mathbf{z} \leq \lambda_{\max}(\mathbf{J}_0^{-1})\|\mathbf{z}\|_2^2 \leq \lambda_{\max}(\mathbf{J}_0^{-1})B^2 
    \end{align}
    substituting back into $\Delta(\mathbf{z};\mathbf{J})$ we get 
    \begin{align}
         \Delta(\mathbf{z}; \mathbf{J}) &=
            \log\!\left(1 + \alpha\, \mathbf{z}^\top \mathbf{J}^{-1}\mathbf{z}\right) \\ &\leq \log (1 + \alpha\lambda_{\max}(\mathbf{J}_0^{-1})B^2 )
    \end{align}
    Finally, if $\alpha \leq \varepsilon$, then $ \Delta(\mathbf{z}; \mathbf{J}) \leq \log(1+C\varepsilon) = \mathcal{O}(\varepsilon)$ where $C=\lambda_{\max}(\mathbf{J}_0^{-1})B^2$.
\end{proof}

\subsection{Proof of \Cref{lemma:redundancy}}
\label{appendix:proof_redundancy}
\begin{lemma*}[Redundancy leads to diminishing returns]
Consider repeatedly adding the same $\mathbf{z}$ $k$ times with the same $\alpha$. Then, $\Delta(\mathbf{z};\mathbf{J}_{k}) \geq \Delta(\mathbf{z};\mathbf{J}_{k+1}) $
\end{lemma*}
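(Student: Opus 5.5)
The plan is to interpret $\mathbf{J}_k$ as the precision after $k$ rank-one updates with the same $\mathbf{z}$ and the same $\alpha$:
\begin{align*}
\mathbf{J}_k=\mathbf{J}_0+k\alpha\mathbf{z}\mathbf{z}^\top .
\end{align*}
By definition, $\mathbf{J}_{k+1}=\mathbf{J}_k+\alpha\mathbf{z}\mathbf{z}^\top\succeq\mathbf{J}_k$. We assume $\mathbf{J}_0\succ\mathbf{0}$, so every $\mathbf{J}_k$ is positive definite and invertible.

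The most direct route is to view this as a special case of \Cref{thm:submodularity}. Take $\mathcal X$ to consist of $k+1$ distinct copies of the same evidence item, all with embedding $\mathbf{z}$ and weight $\alpha$. Let $A$ be the first $k-1$ copies, $B$ the first $k$ copies, and $x$ the $(k+1)$-st copy. Then $\mathbf{J}(A)=\mathbf{J}_{k-1}$ and $\mathbf{J}(B)=\mathbf{J}_k$. Submodularity then gives $\Delta(\mathbf{z};\mathbf{J}_{k-1})\ge\Delta(\mathbf{z};\mathbf{J}_k)$ for every $k\ge1$, which, with the index shifted by one, is exactly the claim.

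For a self-contained argument, I would proceed as follows.

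\begin{enumerate}
\item Write the marginal gain using \Cref{eq:delta_ig} (the matrix determinant lemma):
\begin{align*}
\Delta(\mathbf{z};\mathbf{J}_k)=\log\bigl(1+\alpha\,\mathbf{z}^\top\mathbf{J}_k^{-1}\mathbf{z}\bigr).
\end{align*}
\item Since $\mathbf{J}_{k+1}\succeq\mathbf{J}_k\succ\mathbf{0}$, order reversal under inversion gives $\mathbf{J}_{k+1}^{-1}\preceq\mathbf{J}_k^{-1}$. Hence
\begin{align*}
\mathbf{z}^\top\mathbf{J}_{k+1}^{-1}\mathbf{z}\le\mathbf{z}^\top\mathbf{J}_k^{-1}\mathbf{z}.
\end{align*}
\item Because $\alpha\ge0$, the map $u\mapsto\log(1+\alpha u)$ is nondecreasing. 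Applying it to the inequality in step 2 yields the result.
\end{enumerate}

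Optionally, the decrease can be made explicit with Sherman--Morrison. Set $c_k=\mathbf{z}^\top\mathbf{J}_k^{-1}\mathbf{z}$. Then
\begin{align*}
c_{k+1}=c_k-\frac{\alpha c_k^2}{1+\alpha c_k}=\frac{c_k}{1+\alpha c_k}\le c_k .
\end{align*}
This also gives the closed form $c_k=c_0/(1+k\alpha c_0)$, so the marginal gain tends to $0$ like $1/k$, which quantifies the diminishing returns.

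I do not expect a genuine obstacle; the argument is essentially the same as in \Cref{thm:submodularity}. The only point needing care is the meaning of $\mathbf{J}_k$: it must be the precision after $k$ copies have been added, all starting from a positive-definite $\mathbf{J}_0$, so that the inverses exist and the Loewner ordering applies.
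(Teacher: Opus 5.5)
Your self-contained argument (matrix determinant lemma, then order reversal under inversion from $\mathbf{J}_{k+1}\succeq\mathbf{J}_k\succ\mathbf{0}$, then monotonicity of $u\mapsto\log(1+\alpha u)$) is correct and matches the paper's proof step for step, including the definition $\mathbf{J}_k=\mathbf{J}_0+k\alpha\mathbf{z}\mathbf{z}^\top$. Your reduction to the submodularity theorem and the Sherman--Morrison closed form $c_k=c_0/(1+k\alpha c_0)$ are also valid; they are extras the paper does not include.
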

\begin{proof}
    Let $\mathbf{z} \in \mathbb{R}^d$, $\alpha\geq 0$, define $\mathbf{J}_k=\mathbf{J}_0+k\alpha\mathbf{zz^\top}$ with $\mathbf{J}_0\succ\mathbf{0}$. By definition, we know 
    \begin{align}
        \mathbf{J}_{k+1}=\mathbf{J}_k + \alpha\, \mathbf{z}\mathbf{z}^\top\succeq \mathbf{J}_k \succ \mathbf{0}.
    \end{align}
    By order reversal under inversion we know $\mathbf{J}_{k+1}^{-1} \preceq \mathbf{J}_{k}^{-1}$, and hence
    \begin{align}
\mathbf{z}^\top{\mathbf{J}_{k+1}^{-1}}\mathbf{z}\leq\mathbf{z}^\top{\mathbf{J}_{k}^{-1}}\mathbf{z}
    \end{align}
     Because $x \mapsto \log(1+\alpha x)$ is nondecreasing for $\alpha \geq 0$, we conclude
    \begin{align}
                \Delta(\mathbf{z}; \mathbf{J}_{k+1}) &=
            \log\!\left(1 + \alpha\, \mathbf{z}^\top \mathbf{J}^{-1}_{k+1}\mathbf{z}\right) \\& \leq       
            \log\!\left(1 + \alpha\, \mathbf{z}^\top \mathbf{J}^{-1}_k\mathbf{z}\right) = \Delta(\mathbf{z}; \mathbf{J}_{k}) .
    \end{align}
\end{proof}

\section{Toy Examples}
\label{appendix:toy_examples}

\noindent\textbf{Goal.}
We construct a controlled information-seeking dialogue where \emph{oracle} uncertainty can be computed exactly using a finite hypothesis universe. This allows a direct comparison between (i) oracle information gain computed by eliminating inconsistent hypotheses, and (ii) our Gaussian information gain computed purely in embedding space.

\noindent\textbf{Universe and Prior.}
Let $U$ be a finite set of hypotheses. In our instantiations, $U$ is a fixed list of $N$ cities (city-guessing), a fixed list of $N$ movies (movie-guessing), and more generally QAs related to gardening or workouts, with $N=100$.
We initialize either a uniform prior or a simple non-uniform prior (e.g., population-based city priors):
\begin{align}
p_0(u) &= \frac{1}{N}, \quad u \in U, \label{eq:toy-uniform-prior}\\
\text{H}_0 &= -\sum_{u \in U} p_0(u)\log p_0(u) = \log N. \label{eq:toy-H0}
\end{align}

\noindent\textbf{Dialogue and Evidence.}
We generate a multi-turn dialogue $\{(q_t,a_t)\}_{t=1}^T$ consisting of yes/no questions for city-guessing and movie-guessing, as well as general responses to gardening and workout advice questions.
We compare two variants of equal length:
(i) \textbf{Novel}, where each turn introduces new discriminative information, and
(ii) \textbf{Redundant}, where later turns repeat earlier information (and thus should yield diminishing or zero marginal gain under the oracle).

\noindent\textbf{Oracle Consistency Update.}
Let $e_t$ denote the evidence revealed at turn $t$ (e.g., the semantic constraint implied by $(q_t,a_t)$).
We maintain a feasible set $S_t \subseteq U$ of hypotheses consistent with all evidence so far:
\begin{align}
S_0 &= U, \label{eq:toy-S0}\\
S_t &= \Bigl\{u \in S_{t-1} \;\big|\; u \text{ is consistent with } e_t \Bigr\}. \label{eq:toy-St}
\end{align}
In our implementation, since we know the universe, we can manually construct the consistency predicates at each turn given the question and answers, mapping dialogue history to a subset of surviving hypotheses, and we enforce $S_t \subseteq S_{t-1}$ when the turns are not irrelevant or redundant. 

\noindent\textbf{Oracle Entropy and Information Gain.}
Under the uniform posterior over $S_t$, the oracle entropy is
\begin{align}
\text{H}_t &= \log |S_t|. \label{eq:toy-Ht}
\end{align}
The per-turn oracle information gain is then
\begin{align}
\mathrm{IG}^{\text{oracle}}_t
&= \text{H}_{t-1} - \text{H}_t
= \log \frac{|S_{t-1}|}{|S_t|} \;\ge\; 0, \label{eq:toy-ig}
\end{align}
and the cumulative oracle gain telescopes:
\begin{align}
\sum_{t=1}^T \mathrm{IG}^{\text{oracle}}_t
&= \log \frac{|S_0|}{|S_T|}
= \log \frac{N}{|S_T|}. \label{eq:toy-telescope}
\end{align}
For redundant turns that add no new constraint, we have $S_t = S_{t-1}$ and thus $\mathrm{IG}^{\text{oracle}}_t = 0$.

\noindent\textbf{Comparison to \GaussianIG.}
For the same dialogues, we compute our Gaussian information gain using only question--answer embeddings and closed-form covariance updates (\Cref{eq:precision_update}).
We plot per-turn gain and cumulative gain for both the oracle and Gaussian metrics; across controlled settings, dialogues that introduce novel constraints for more turns achieve larger cumulative gain, while redundant variants exhibit diminished marginal gain. Refer to \Cref{fig:redundancy_recovery}, \Cref{fig:city}, \Cref{fig:movies}, \Cref{fig:gardening} for results. Note that the user and LLM here are synthetic.
\begin{figure*}[t]
  \centering
  \small
\includegraphics[width=0.8\textwidth]{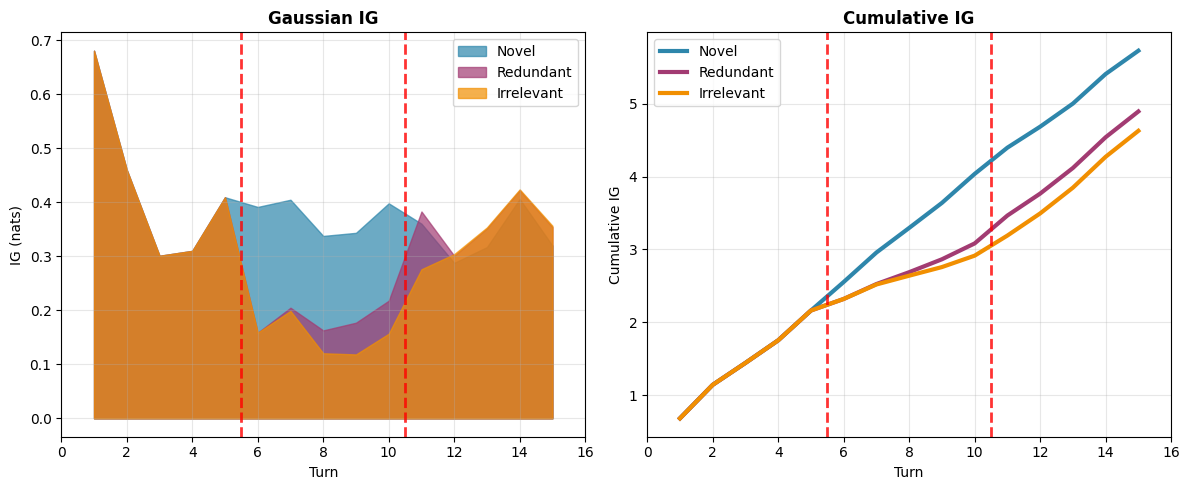}
  \caption{Information gain distinguishes dialogue quality in a 15-turn synthetic setting.
  We compare dialogues that introduce novel, redundant, or irrelevant information.
  (Left) Per-turn Gaussian information gain: redundant and irrelevant turns yield lower marginal gain than novel turns.
  (Right) Cumulative information gain: dialogues with a higher proportion of novel information achieve consistently higher total gain despite approximation noise.}
  \label{fig:redundancy_recovery}
\end{figure*}
We can see per-turn information gain decreases during the redundant and irrelevant phase, reflecting diminishing returns when new evidence does not substantially reduce uncertainty.
When novel information is reintroduced, the marginal gain increases again.
While the Gaussian approximation does not exactly match an oracle uncertainty model, it consistently preserves the dialogue-level ordering: conversations that introduce novel information for a larger fraction of turns achieve higher cumulative information gain.
This demonstrates that the metric captures meaningful differences in long-horizon dialogue progress without relying on learned judges or supervision. 

\begin{figure*}[t]
  \centering
  \includegraphics[width=\textwidth]{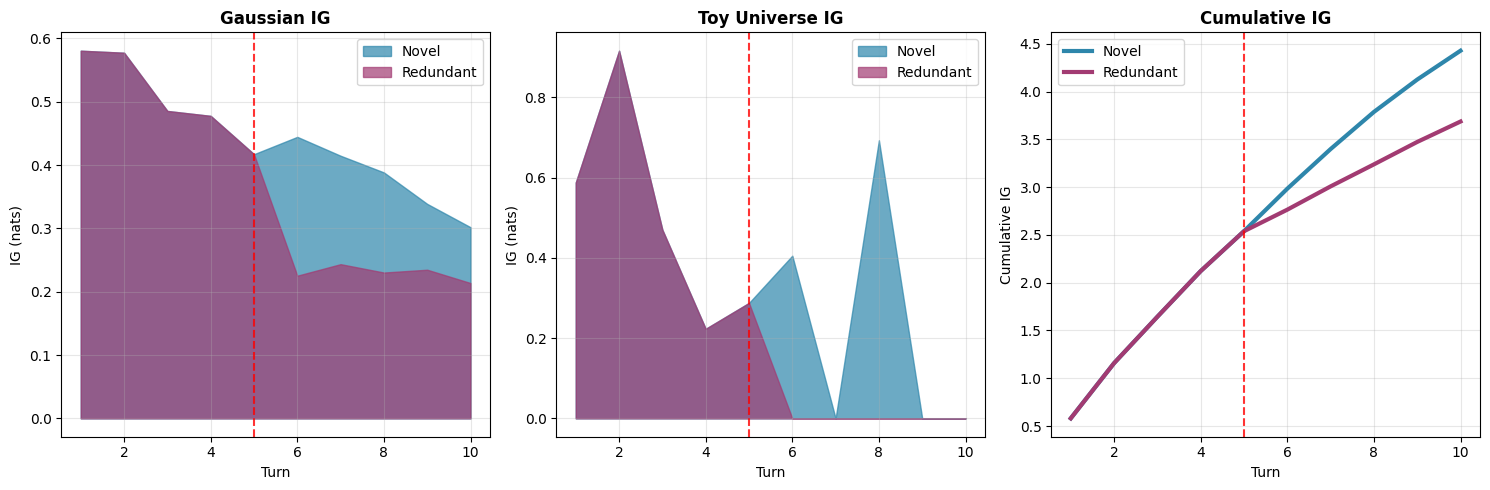}
  \caption{Toy example with the city guessing universe. The LLM tries to guess a city by asking questions and the user provides Yes/No to the questions. We compare the case where at turn 5, the LLM starts asking redundant questions to if it kept asking novel questions, in an ideal universe we know $S_t=S_{t-1}$ so information gain is $0$, in our Gaussian Approximation, although information gain does not drop to $0$, we see a clear difference between if the questions asked was novel or redundant.}
  \label{fig:city}
\end{figure*}

\begin{figure*}[t]
  \centering
  \includegraphics[width=\textwidth]{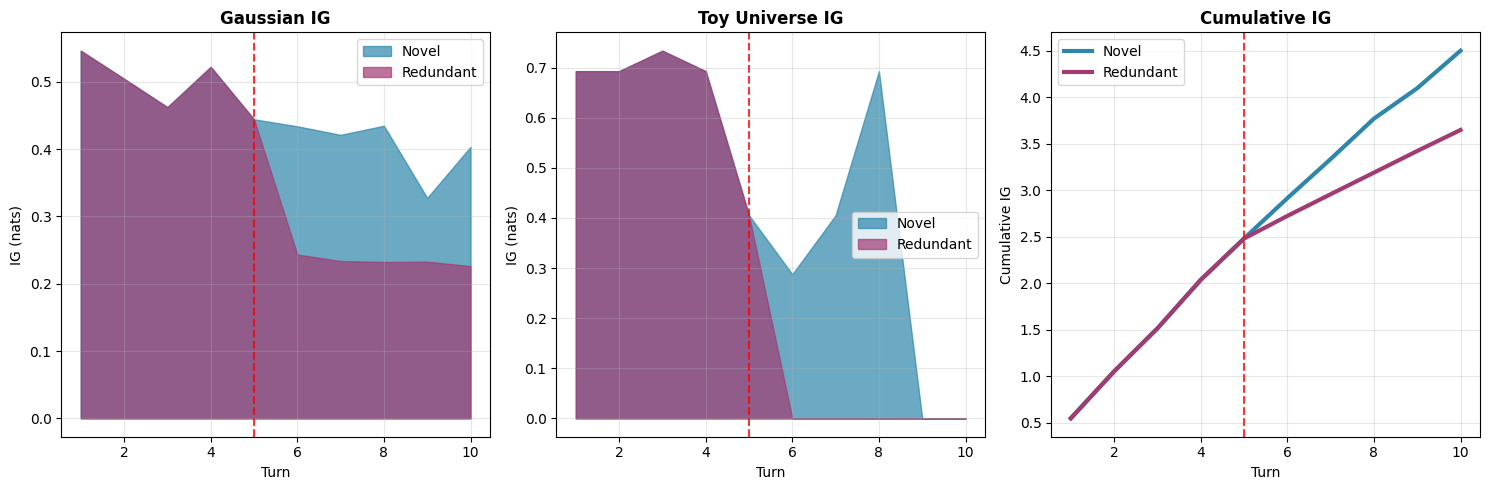}
  \caption{Toy example with the movie guessing universe. The LLM tries to guess a movie by asking questions and the user provides Yes/No to the questions. We compare the case where at turn 5, the LLM starts asking redundant questions to if it kept asking novel questions. The trends observed are similar to \Cref{fig:city}.}
  \label{fig:movies}
\end{figure*}

\begin{figure*}[t]
  \centering
  \includegraphics[width=\textwidth]{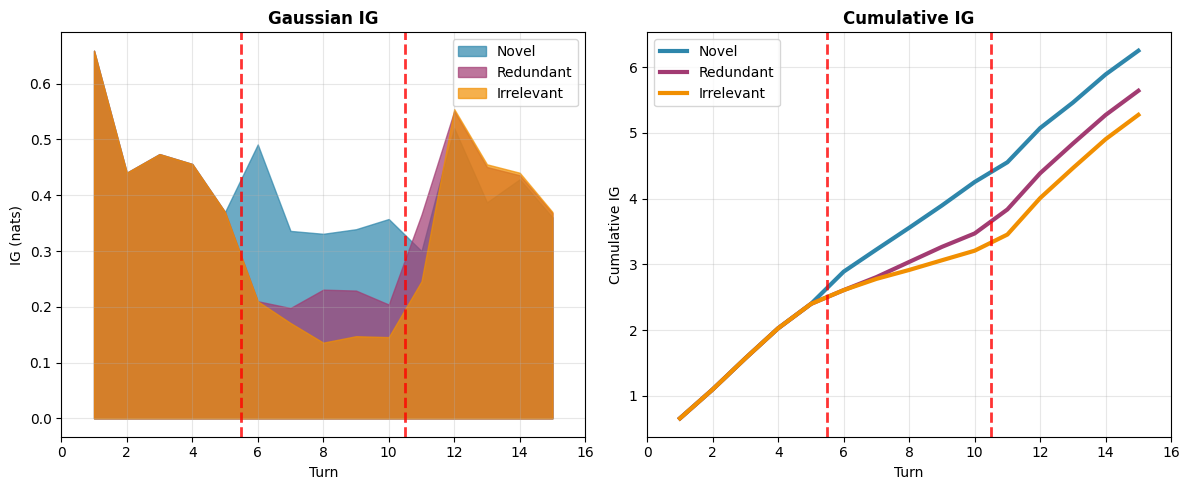}
  \caption{Toy example where the user asks open questions for advice related to gardening. The LLM tries to answer the question. We compare the cases where between turns 5 and 10, where LLM answers are either novel, redundant, or irrelevant. We observe similar trends as \Cref{fig:city}, \Cref{fig:movies}, and also that information gain recovers after the LLM starts asking novel questions again.}
  \label{fig:gardening}
\end{figure*}

\section{Experiment Details} \label{app:exp}

\paragraph{Algorithm and Parameters.} \Cref{alg:ig} presents the estimator using an explicit evidence-level loop for readability. In implementation, the per-turn rank-one updates can be vectorized as $\mathbf{J}'=\mathbf{J}+\sigma^{-2}\mathbf{Z}_t^\top \operatorname{diag}(\mathbf{w}_t)\mathbf{Z}_t$, where rows of $\mathbf{Z}_t$ are evidence embeddings. In \Cref{tab:experimental_settings}, we report the parameters we used.

\paragraph{Existing assets and licenses.}
We use existing public benchmarks and models only for research evaluation. MT-Bench and Chatbot Arena are used following their public release terms~\cite{zheng2023judging}; UltraFeedback is used following its public release terms~\cite{10.5555/3692070.3692454}. Embedding models and LLM baselines are credited through their original papers or model cards. We do not redistribute any existing datasets, model checkpoints, or generated annotations.

\paragraph{LLM Scoring.} We use pointwise scalar judging without explanations for all LLM baselines to standardize API outputs and runtime measurement. Pairwise prompts may improve these baselines, especially for reasoning-specialized models. Thus, these baselines should be interpreted as operational judge configurations rather than the best possible performance of each underlying LLM. Since the evaluation set excludes human-tie cases, predicted ties are counted as incorrect: when the two candidate dialogues receive the same 1--10 score, the judge fails to identify the preferred dialogue.

\begin{algorithm}[t]
\caption{\GaussianIG: Gaussian Information Gain Evaluation }
\label{alg:ig}
\begin{algorithmic}[1]
\Input Dialogue turns $(q_t,a_t)_{t=1}^T$, encoder $\mathrm{enc}$, parameters $\sigma_0,\sigma,\beta,\eta$
\Output Dialogue score $\widehat{\mathcal P}_q(D_{1:T})$

\State $\mathbf{J} \gets \sigma_0^{-2}\mathbf{I}$, \quad $\widehat{\mathcal P} \gets 0$ \Comment{Initialize prior precision}
\For{$t=1,\ldots,T$}
    \State $\{Z_{t,i}\}_{i=1}^{m_t} \gets \mathrm{Split}(a_t)$ \Comment{Extract answer evidence}
    \State $\mathbf{q}_t \gets \mathrm{enc}(q_t)$, \quad $\widehat{\mathbf q}_t \gets \mathbf{q}_t/\|\mathbf{q}_t\|_2$ \Comment{Embed current user need}
    \For{$i=1,\ldots,m_t$}
        \State $\mathbf{z}_{t,i} \gets \mathrm{enc}(Z_{t,i})$, \quad
        $\widehat{\mathbf z}_{t,i} \gets \mathbf{z}_{t,i}/\|\mathbf{z}_{t,i}\|_2$ \Comment{Embed evidence unit}
        \State $w_{t,i} \gets \max(0,\langle \widehat{\mathbf q}_t,\widehat{\mathbf z}_{t,i}\rangle)^\beta$ \Comment{Question relevance}
        \State \textbf{if} $w_{t,i} < \eta$ \textbf{ then } $w_{t,i} \gets 0$ \Comment{Discard weak evidence}
    \EndFor
    \State $\mathbf{J}' \gets \mathbf{J} + \sum_{i=1}^{m_t} \frac{w_{t,i}}{\sigma^2}\mathbf{z}_{t,i}\mathbf{z}_{t,i}^{\top}$ \Comment{Precision update}
    \State $\widehat{\mathrm{IG}}_t \gets \frac{1}{2}\left(\log\det \mathbf{J}' - \log\det \mathbf{J}\right)$ \Comment{Turn-level information gain}
    \State $\mathbf{J} \gets \mathbf{J}'$, \quad $\widehat{\mathcal P} \gets \widehat{\mathcal P} + \widehat{\mathrm{IG}}_t$ \Comment{Accumulate progress}
\EndFor
\State \Return $\widehat{\mathcal P}$
\end{algorithmic}
\end{algorithm}

\begin{table}[t]
\centering
\small
\caption{Default experimental settings for Gaussian information-gain evaluation.}
\label{tab:experimental_settings}
\begin{tabular}{ll}
\toprule
\textbf{Setting} & \textbf{Value} \\
\midrule
Default encoder & \texttt{Qwen/Qwen3-Embedding-0.6B} via \texttt{SentenceTransformer} \\
Hardware & Apple M1 Pro, 10-core CPU, 16-core GPU, 16GB RAM \\
Device backend & MPS \\
Batch size & 32 \\
Encoder mode & Evaluation mode \\
Evidence segmentation & Deterministic sentence/bullet chunking \\
Minimum chunk length & 12 characters \\

Embedding normalization & L2-normalized before relevance scoring \\
Relevance weight & $r^{\mathrm{rel}}_{t,i}=\max(0,\langle \widehat{\mathbf q}_t,\widehat{\mathbf z}_{t,i}\rangle)^\beta$ \\
Relevance exponent & $\beta=1.0$ \\
Relevance cutoff & $\eta=0.05$ \\
Quality factor & $r^{\mathrm{qual}}_{t,i}=1$ \\
Projection dimension & $k=d$; no random projection \\
Prior variance & $\sigma_0^2=1.0$ \\
Observation variance & $\sigma^2=0.25$ \\
Log-determinant computation & \texttt{numpy.linalg.slogdet} \\
Numerical jitter & $10^{-12}$, only if log-det sign is nonpositive \\
Runtime measurement & End-to-end wall-clock time including embedding and Gaussian updates \\
\bottomrule
\end{tabular}
\end{table}

\noindent\textbf{Timing Protocol.}
Let $\mathcal{D}=\{(D^A_i,D^B_i)\}_{i=1}^N$ denote a fixed ordered subset of
MT-Bench dialogue pairs.
For each method, we measure runtime using Python wall-clock timers
(\texttt{time.perf\_counter}) as follows:
\begin{enumerate}
    \item A fixed random seed is used to select and order evaluation examples.
    \item Timing starts immediately before the first scoring call.
    \item Each dialogue pair is scored sequentially.
    \item Timing stops after the final prediction is produced.
\end{enumerate}
\begin{table}[t]
\centering
\caption{
Runtime variability across $R=5$ runs.
Values report mean $\pm$ standard deviation of end-to-end wall-clock time (seconds) for evaluating $N=100$ dialogue pairs.
}
\label{tab:runtime_std}
\begin{tabular}{lcc}
\toprule
\textbf{Method}
& \textbf{MT-Bench}
& \textbf{Chatbot Arena} \\
\midrule
Mistral Large 3
& $131.8 \pm 11.8$
& $118.9 \pm 2.0$ \\

DeepSeek R1
& $633.1 \pm 9.2$
& $479.6 \pm 3.2$ \\

GPT OSS 120b
& $568.0 \pm 198.0$
& $328.3 \pm 94.7$ \\

Claude Sonnet 3.7
& $259.9 \pm 13.1$
& $250.9 \pm 23.9$ \\

Claude Sonnet 4
& $219.6 \pm 12.7$
& $175.4 \pm 8.4$ \\

Claude Sonnet 4.5
& $341.9 \pm 16.8$
& $286.3 \pm 12.8$ \\

\midrule
\textbf{Ours (\GaussianIG)}
& \textbf{$86.4 \pm 6.8$}
& \textbf{$44.1 \pm 5.3$} \\
\bottomrule
\end{tabular}
\end{table}
We report total elapsed time and normalized runtime (seconds per dialogue
pair), averaged over $R=5$ runs.
We further verify that overhead from data access and prediction logic is
negligible relative to scoring time. The standard deviations for the runtimes are reported in \Cref{tab:runtime_std}.

\noindent\textbf{Judge Prompt:}

\begin{quote}
\small
You are an expert evaluator of LLM responses.

Evaluate the quality of the following response with respect to the
dimension: {\texttt{\{DIMENSION\_DEFINITION\}}}.

{Conversation:}

\{\texttt{CONVERSATION}\}

Rate the response on a scale from 1 to 10,

Do not reward answers simply for being longer. Focus on quality relative to the specified dimension.

Return ONLY a single integer from 1 to 10. Do not include any explanation or additional text.
\end{quote}

The dimension definitions injected into the prompt are:

\begin{itemize}

\item \textbf{Helpfulness:} The response should directly and usefully address the user's request, provide relevant information, and avoid unnecessary or unhelpful content.

\item \textbf{Instruction following:} The response should follow the user's explicit and implicit instructions, including requested format, scope, constraints, and task requirements.

\item \textbf{Truthfulness:} The response should avoid factual errors, unsupported claims, and misleading statements.

\item \textbf{Honesty:} The response should accurately represent uncertainty, limitations, and whether the requested information is known or inferable.
\end{itemize}

\section{Why Larger Embedding Models Might Not Help}
\label{app:large_models}
From \Cref{eq:delta_ig}, the per-turn gain depends only on the scalar
$\mathbf{z}^\top \mathbf{J}_{t-1}^{-1}\mathbf{z}$, measuring alignment with directions of remaining uncertainty.
Diagonalizing $\boldsymbol{\Sigma}_{t-1}=\mathbf{J}_{t-1}^{-1}=\mathbf{U}\boldsymbol{\Lambda}\mathbf{U}^\top$ and writing $\mathbf{z}=\mathbf{U}\mathbf{c}$ yields
\begin{align}
    \IG(\mathbf{z})=\frac{1}{2}\log (1+\alpha\sum_i \lambda_i c_i^2),
\end{align}
where $\lambda_i$ are the eigenvalues of $\boldsymbol{\Sigma}_{t-1}$.
Crucially, information gain depends on the \emph{effective rank} of remaining uncertainty rather than the raw dimensionality or parameter count of the embedding model.
Once embeddings capture the dominant semantic axes relevant to the task, larger models primarily refine representations within directions where $\lambda_i$ is already small, yielding negligible additional volume reduction.
As a result, increased model capacity does not necessarily translate into higher information gain.

\section{Synthetic Experiment Details} \label{app:syn}
\subsection{Padding}
\label{app:padding}

\begin{table}[h]
\centering
\caption{
Length and padding robustness on UltraFeedback on embedding model \cite{minishlab2024model2vec}. Each entry reports the mean absolute score change $\Delta_{c,r}$, with the relative change shown in parentheses.
}
\label{tab:padding_potion}
\begin{tabular}{rccc}
\toprule
$r$ & Irrelevant & Redundant & Novel relevant \\
\midrule
0.00 & $0.000$ $(0.00\%)$ & $0.000$ $(0.00\%)$ & $0.000$ $(0.00\%)$ \\
0.25 & $0.000$ $(0.00\%)$ & $0.049$ $(0.76\%)$ & $1.076$ $(25.27\%)$ \\
0.50 & $0.000$ $(0.00\%)$ & $0.060$ $(1.04\%)$ & $1.860$ $(39.64\%)$ \\
1.00 & $0.001$ $(0.02\%)$ & $0.066$ $(1.53\%)$ & $3.234$ $(64.85\%)$ \\
2.00 & $0.012$ $(0.19\%)$ & $0.068$ $(1.77\%)$ & $5.184$ $(103.78\%)$ \\
\bottomrule
\end{tabular}
\end{table}

\begin{table}[h]
\centering
\caption{
Length and padding robustness on UltraFeedback on embedding model \cite{wang2020minilmdeepselfattentiondistillation}. Each entry reports the mean absolute score change $\Delta_{c,r}$, with the relative change shown in parentheses.
}
\label{tab:padding_mini}
\begin{tabular}{rccc}
\toprule
$r$ & Irrelevant & Redundant & Novel relevant \\
\midrule
0.00 & $0.000$ $(0.00\%)$ & $0.000$ $(0.00\%)$ & $0.000$ $(0.00\%)$ \\
0.25 & $0.000$ $(0.00\%)$ & $0.083$ $(1.19\%)$ & $1.130$ $(23.62\%)$ \\
0.50 & $0.004$ $(0.05\%)$ & $0.100$ $(1.54\%)$ & $1.978$ $(38.13\%)$ \\
1.00 & $0.010$ $(0.15\%)$ & $0.106$ $(2.10\%)$ & $3.583$ $(66.50\%)$ \\
2.00 & $0.032$ $(0.48\%)$ & $0.111$ $(2.45\%)$ & $5.697$ $(105.07\%)$ \\
\bottomrule
\end{tabular}

\end{table}

\begin{table}[h]
\centering
\caption{Length and padding robustness on UltraFeedback on embedding model \cite{qwen3embedding}. Each entry reports the mean absolute score change $\Delta_{c,r}$, with the relative change shown in parentheses.}
\label{tab:padding_qwen0.6}
\begin{tabular}{rccc}
\toprule

$r$ & Irrelevant & Redundant & Novel relevant \\

\midrule

0.00 & $0.000$ $(0.00\%)$ & $0.000$ $(0.00\%)$ & $0.000$ $(0.00\%)$ \\

0.25 & $0.037$ $(0.82\%)$ & $0.050$ $(0.47\%)$ & $1.183$ $(23.69\%)$ \\

0.50 & $0.037$ $(0.82\%)$ & $0.067$ $(0.73\%)$ & $2.127$ $(39.58\%)$ \\

1.00 & $0.036$ $(0.81\%)$ & $0.055$ $(0.51\%)$ & $3.714$ $(68.25\%)$ \\

2.00 & $0.062$ $(1.19\%)$ & $0.069$ $(0.75\%)$ & $6.589$ $(122.46\%)$ \\

\bottomrule

\end{tabular}

\end{table}

A potential concern is that $\widehat{\mathcal P}$ may reward response length rather than semantic progress. To isolate the effect of added content, we construct controlled perturbations of real UltraFeedback prompt-response pairs. For each base pair $(q,a)$, we form a padded response
\[
a^{(c,r)} = a \oplus p^{(c,r)},
\]
where $\oplus$ denotes string concatenation, $c$ denotes the padding condition, and $r$ is the target padding ratio. The padding text is chosen so that
\[
|p^{(c,r)}| \approx r|a|,
\]
where $|\cdot|$ denotes token length. We then measure the score change
\[
\Delta_{c,r}
=
\widehat{\mathcal P}_q(a^{(c,r)})
-
\widehat{\mathcal P}_q(a).
\]

We consider three padding conditions. \emph{Irrelevant padding} is sampled from responses to unrelated prompts and filtered to have low embedding similarity to $q$. \emph{Redundant padding} repeats evidence units already present in the original response $a$. \emph{Novel relevant padding} is sampled from other responses to the same prompt and filtered to be relevant to $q$. The filtering is used only to construct controlled perturbation sets; the scoring procedure itself is unchanged.

This setup separates three possible effects of added text: generic length increase, repetition of existing evidence, and addition of new question-relevant evidence. If $\widehat{\mathcal P}_q$ were primarily a length proxy, all three padding conditions would increase the score similarly as $r$ grows. Instead, we find that irrelevant and redundant padding produce only small score changes, while novel relevant padding produces a substantially larger increase. These results indicate that $\widehat{\mathcal P}_q$ is sensitive to whether added content contributes new question-relevant evidence, rather than simply rewarding longer responses. Tables \ref{tab:padding_potion}, \ref{tab:padding_mini}, \ref{tab:padding_qwen0.6}, show the results with $r\in \{0.00, 0.25, 0.50, 1.00, 2.00\}$. We evaluate it on $n=100$ examples, yielding a total of $1500$ dialogues.

\subsection{Hallucination}
\label{app:synth}

\begin{figure*}[t]
\centering
\begin{subfigure}{0.32\textwidth}
\includegraphics[width=\linewidth]{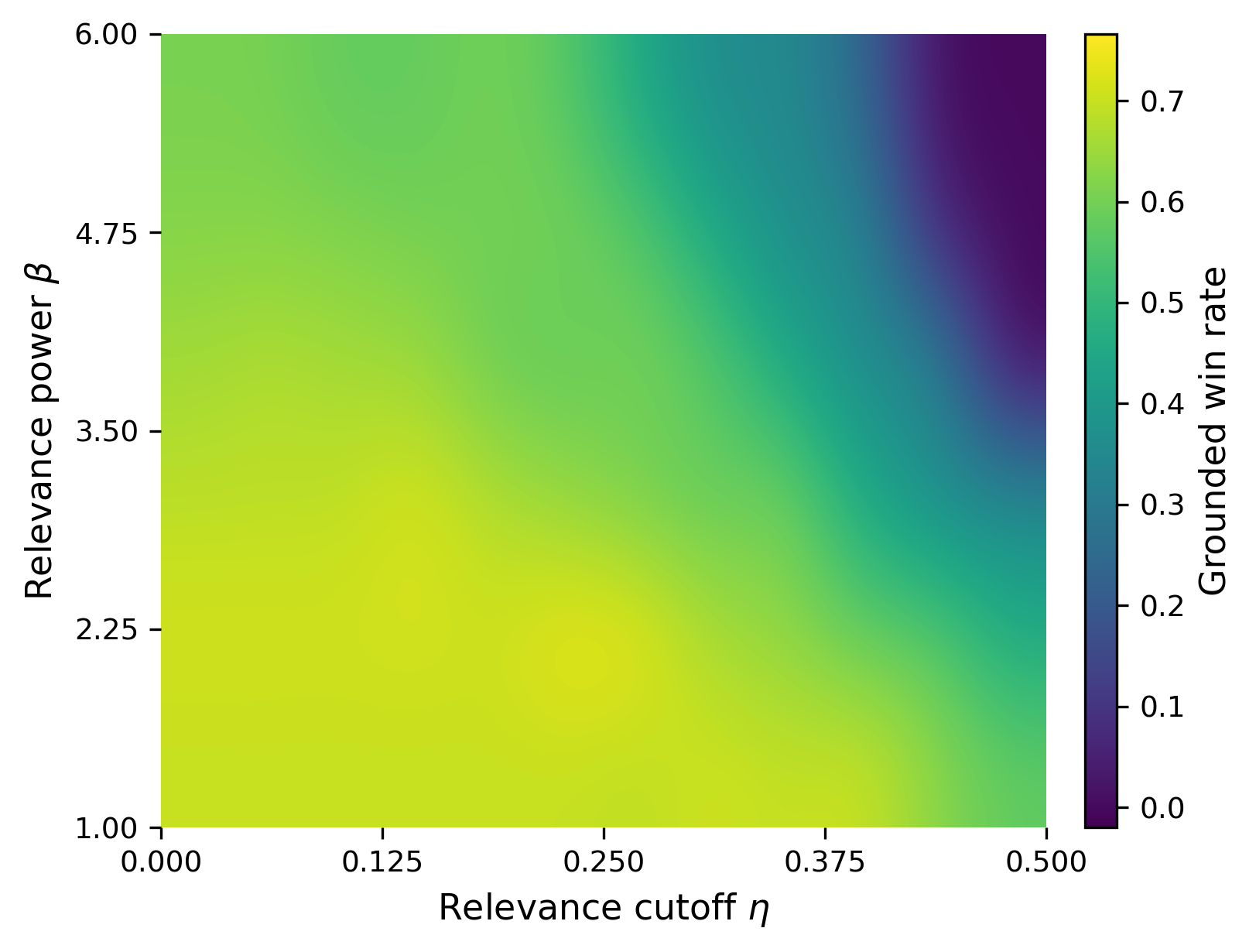}
\caption{potion-base-2M}
\end{subfigure}
\begin{subfigure}{0.32\textwidth}
\includegraphics[width=\linewidth]{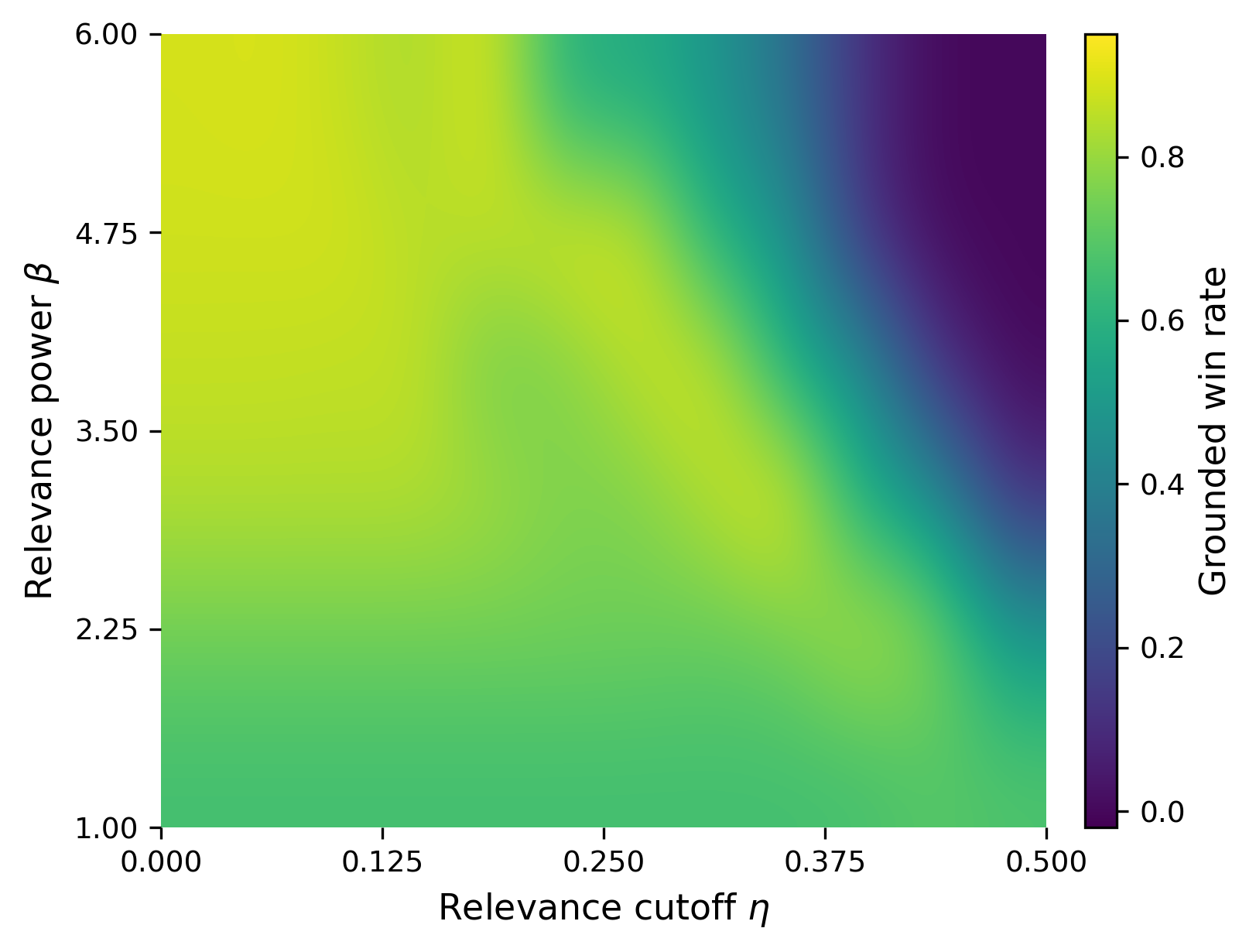}
\caption{Qwen3-Embedding-0.6B}
\end{subfigure}
\begin{subfigure}{0.32\textwidth}
\includegraphics[width=\linewidth]{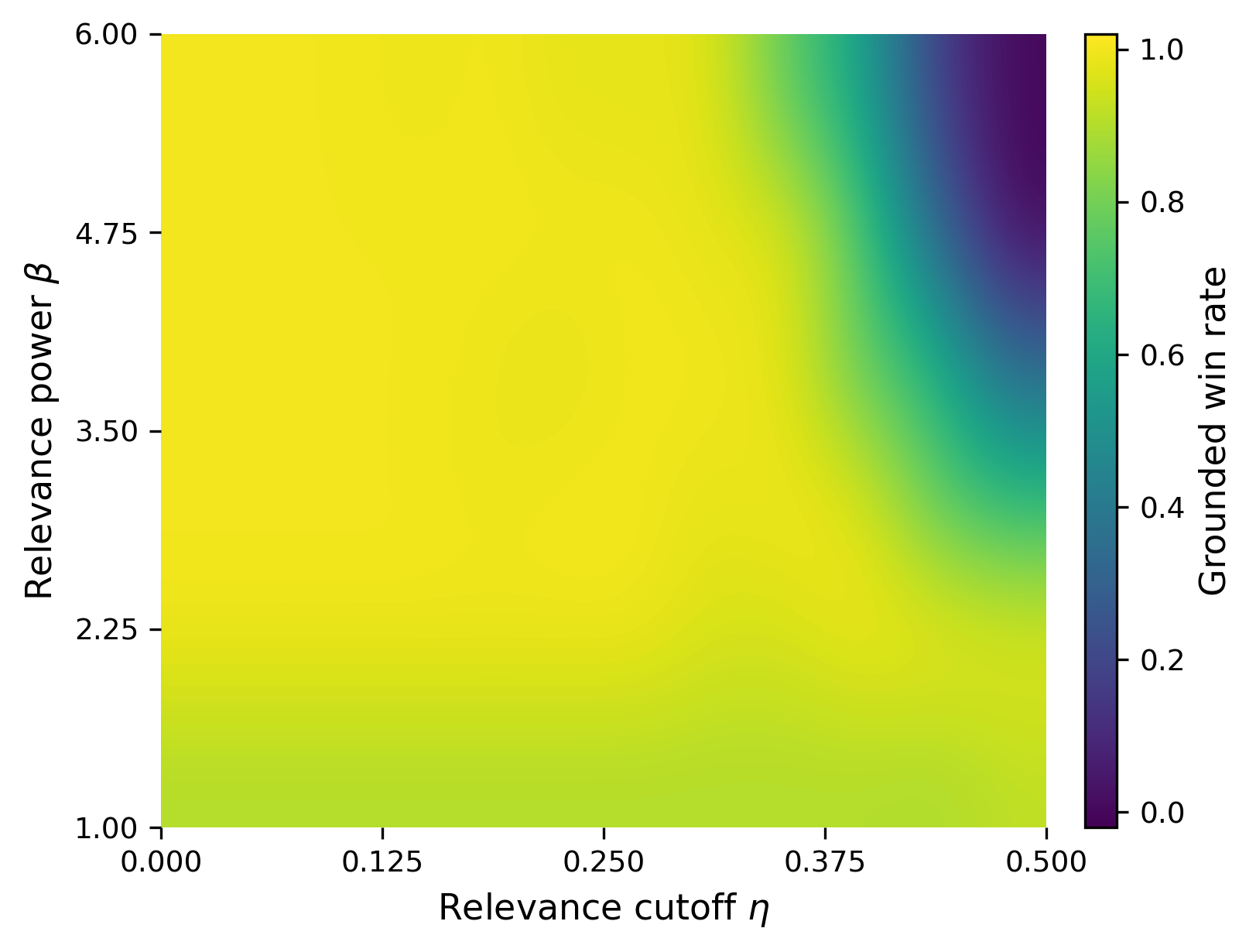}
\caption{Qwen3-Embedding-4B}
\end{subfigure}

\caption{
Sensitivity of grounded win rate to the relevance cutoff $\eta$ and weighting exponent $\beta$ in the synthetic hallucination stress test.
}
\label{fig:eta_beta}
\end{figure*}

Let each domain define a set of valid entity pairs $(x, y) \in \mathcal{F}_d$ where $x$ is the query entity and $y$ is the correct answer entity. For example in the country-capital domain, $(\mathrm{Germany}, \mathrm{Berlin}) \in \mathcal{F}_d$. 

For each pair $(x, y)$, we instantiate a factual response from a fixed template $T_d(x, y)$. We then construct a hallucinated counterpart by replacing $y$ with an entity $y' \in \mathcal{Y}_d$ where $y' \neq y$ and $(x, y') \not \in \mathcal{F}_d$. This would give us a matched pair $$T_d(x, y) \quad \text{ vs. } \quad T_d(x, y')$$ where both responses share the same query entity and surface structure, but only the first contains the correct factual relation.

To increase linguistic diversity, we then use LLMs OpenAI's GPT 5.4 \cite{openai2026gpt54} only as paraphrasers, the entities $x, y, y'$ are fixed, and the model is instructed not to alter them. Thus the correctness is determined by the manually specified factual relation set $\mathcal{F}_d$, not by the LLM.

For each matched pair, we compute \GaussianIG for the factual and corrupted responses under a sweep of relevance cutoffs $\eta$ and relevance exponents $\beta$. We report grounded win rate, defined as the fraction of pairs for which the factual response receives a higher semantic-progress score than the corrupted response:
\begin{align}
    \mathrm{WinRate} = \frac{1}{N}\sum_{i=1}^N \mathbb{I}\left[\widehat{P}_q(D^{\mathrm{fact}}_i) > \widehat{P}_q(D^{\mathrm{corr}}_i) \right].
\end{align}
Ties are counted as non-wins. \Cref{fig:eta_beta} shows that sensitivity to factual perturbations
depends on both the embedding model and the relevance parameters. For
Qwen3-Embedding-0.6B, $\beta=1$ gives a stable win rate around $0.66$, while
moderate sharpening with larger $\beta$ raises performance above $0.85$ and
peaks at $0.93$. However, large cutoffs can sharply reduce performance,
indicating that overly aggressive filtering removes useful evidence.

Qwen3-Embedding-4B is more robust, reaching near-perfect win rates across many
settings, while Potion-2M is weaker and degrades under high cutoffs or large
exponents. Overall, the stress test shows that \GaussianIG can prefer
grounded evidence over semantically similar corrupted evidence, but that this
behavior depends on embedding quality and parameter calibration. We therefore
treat factuality evaluation as complementary to semantic-progress evaluation.

\section{Frequently Asked Questions}
\noindent\textbf{What does reference-free mean in this work?}
Reference-free means that the metric does not require human-written ground-truth answers, gold references, or human annotations at evaluation time. The score is computed solely from the dialogue history using fixed embeddings and closed-form updates.

\noindent\textbf{Why assume a Gaussian model in embedding space?}
The Gaussian assumption is a modeling choice that enables a simple, closed-form approximation to information gain with strong theoretical guarantees. Under a Gaussian posterior, information gain reduces to a log-determinant of the covariance matrix, yielding monotonicity, telescoping across turns, and submodularity under rank-one updates.

We do not claim that semantic uncertainty is truly Gaussian; rather, the Gaussian serves as a tractable surrogate that captures relative uncertainty volume in embedding space.

Empirically, we find that this approximation is sufficient to produce a stable and informative evaluative signal across embedding models of widely varying capacity.

\noindent\textbf{Why does uncertainty always decrease under your Bayesian update?}
Our formulation fixes the observation noise variance and updates only the posterior uncertainty over a fixed latent semantic state.
Each update adds a positive semi-definite rank-one matrix to the precision matrix, which guarantees that posterior covariance is non-increasing in Loewner order. This differs from hierarchical Bayesian models in which noise variance or model parameters are inferred and posterior uncertainty may increase.

\noindent\textbf{Can Bayesian updates ever increase uncertainty in general?}
Yes.
In Bayesian models that infer observation noise or higher-level hyperparameters, posterior uncertainty can increase when observations are inconsistent with prior assumptions. Our model intentionally excludes this behavior by fixing the noise variance, enabling a fast, monotone, and tractable approximation to information gain.

\noindent\textbf{Does higher information gain imply factual correctness?}
No. The metric measures epistemic progress rather than correctness. Confident but incorrect responses may still reduce uncertainty in embedding space. We view this limitation as shared with many automatic evaluators and consider factuality checks a complementary signal.

\end{document}